\DeclareMathAlphabet{\mathmybb}{U}{bbold}{m}{n}
\newcommand{\1}{\mathmybb{1}}
\def\rvf{{\mathbf{f}}}
\def\rvu{{\mathbf{i}}}
\def\rvu{{\mathbf{u}}}
\def\rvv{{\mathbf{v}}}
\def\rvx{{\mathbf{x}}}
\def\rvF{{\mathbf{F}}}
\def\vf{{\bm{f}}}
\def\mA{{\bm{A}}}
\DeclareMathAlphabet{\mathsfit}{\encodingdefault}{\sfdefault}{m}{sl}
\SetMathAlphabet{\mathsfit}{bold}{\encodingdefault}{\sfdefault}{bx}{n}
\def\gL{{\mathcal{L}}}
\def\gN{{\mathcal{N}}}
\def\gU{{\mathcal{U}}}
\newcommand{\E}{\mathbb{E}}
\newcommand{\R}{\mathbb{R}}
\newcommand{\twonorm}[1]{\norm{#1}_2}
\newcommand{\defeq}{\vcentcolon=}
\newcommand\numeq[1]%
\DeclareFixedFont{\ttb}{T1}{txtt}{bx}{n}{8} 
\DeclareFixedFont{\ttm}{T1}{txtt}{m}{n}{8}  
\definecolor{deepblue}{rgb}{0,0,0.5}
\definecolor{deepred}{rgb}{0.6,0,0}
\definecolor{deepgreen}{rgb}{0,0.5,0}
\newcommand\pythonstyle{\lstset{
    language=Python,
    basicstyle=\ttm,
    otherkeywords={self},             
    keywordstyle=\ttb\color{deepblue},
    emph={MyClass,__init__},          
    emphstyle=\ttb\color{deepred},    
    stringstyle=\color{deepgreen},
    commentstyle=\ttm\color{deepgreen},
    frame=tb,                        
    showstringspaces=false,
    breaklines,   
  framexleftmargin=-0.5em,
  framexrightmargin=-0.5em, 
}}
\newcommand{\ttimes}{{\mkern-1mu\times\mkern-1mu}}
\newcommand{\ud}{\mathrm{d}}
\theoremstyle{plain}
\newtheorem{theorem}{Theorem}
\theoremstyle{definition}
\theoremstyle{remark}
\definecolor{Gray}{gray}{0.85}
\definecolor{LightCyan}{rgb}{0.88,1,1}
\def\@onedot{\ifx\@let@token.\else.\null\fi\xspace}
\DeclareRobustCommand\onedot{\futurelet\@let@token\@onedot}
\newcommand{\figref}[1]{Figure~\ref{#1}}
\newcommand{\equref}[1]{Eq\onedot~\eqref{#1}}
\newcommand{\secref}[1]{Section~\ref{#1}}
\newcommand{\tabref}[1]{Table~\ref{#1}}
\newcommand{\thmref}[1]{Theorem~\ref{#1}}
\newcommand{\lemref}[1]{Lemma~\ref{#1}}
\newcommand{\appref}[1]{Appendix~\ref{#1}}
\newcommand{\algref}[1]{Algorithm~\ref{#1}}
\newcommand*\circled[1]{\tikz[baseline=(char.base)]{
            \node[shape=circle,draw,inner sep=0.5pt] (char) {#1};}}
\def\eg{\emph{e.g}\onedot}
\def\ie{\emph{i.e}\onedot}
\def\wrt{w.r.t\onedot}
\newcommand{\mypara}[1]{\noindent\textbf{#1}}
\newcommand{\model}[1][]{TVM}
\title{Terminal Velocity Matching}
\author{Linqi Zhou \\
Luma AI \\
\And
Mathias Parger \\
Luma AI \\
\And
Ayaan Haque  \\
Luma AI \\
\And
Jiaming Song \\
Luma AI
} 
\@maketitle\null{{\myfigure{}\par}}{}{}
\begin{document}

\maketitle

\begin{center}
\includegraphics[width=0.95\linewidth]{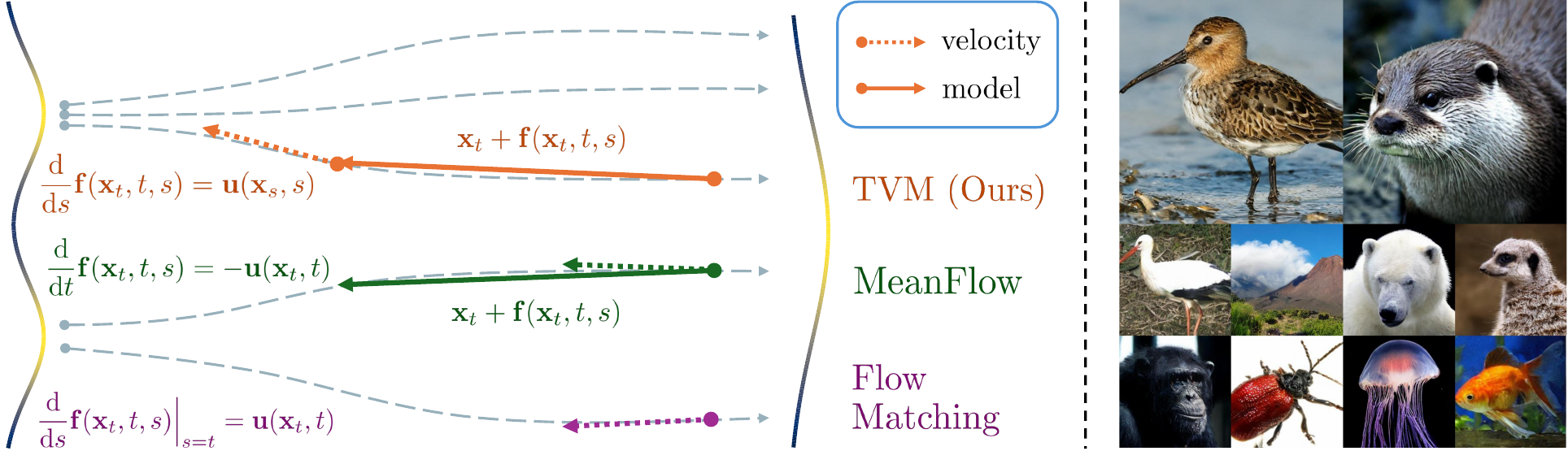} 
\vspace{5pt}
    \captionof{figure}{(\textit{Left}) a conceptual comparison of our method to prior methods. TVM guides the one-step model via \textit{terminal} velocity rather than \textit{initial} velocity. (\textit{Right}) 1-NFE samples on ImageNet at 256 and 512 resolution.}
    \label{fig:teaser}
    \vspace{2em}
\end{center}
\def\arxiv{1}

\everypar{\looseness=-1}

\begin{abstract}
    We propose Terminal Velocity Matching (TVM), a generalization of flow matching that enables high-fidelity one- and few-step generative modeling. TVM models the transition between any two diffusion timesteps and regularizes its behavior at its terminal time rather than at the initial time. We prove that TVM provides an upper bound on the $2$-Wasserstein distance between data and model distributions when the model is Lipschitz continuous. However, since Diffusion Transformers lack this property, we introduce minimal architectural changes that achieve stable, single-stage training. To make TVM efficient in practice, we develop a fused attention kernel that supports backward passes on Jacobian-Vector Products, which scale well with transformer architectures. On ImageNet-$256\ttimes 256$, TVM achieves 3.29 FID with a single function evaluation (NFE) and 1.99 FID with 4 NFEs. It similarly achieves 4.32 1-NFE FID and 2.94 4-NFE FID on ImageNet-$512\ttimes 512$, representing state-of-the-art performance for one/few-step models from scratch.\footnote{For Text-to-Image results at 10B+ scale, visit \url{https://lumalabs.ai/blog/engineering/tvm}}
\end{abstract}    
\section{Introduction}

\textit{Can we build generative models that simultaneously deliver high-quality samples, fast inference, and scalability to high-dimensional data, all from a single training stage?} This is the central challenge that continues to drive research in generative models. While Diffusion Models~\citep{sohl2015deep,ho2020denoising,song2020score} and Flow Matching~\citep{liu2022flow,lipman2022flow} have become the dominant paradigms for generating images~\citep{rombach2022high,podell2023sdxl,esser2024scaling} and videos~\citep{sora,wan2025wan}, they typically require many sampling steps (\eg, 50) to produce high-quality outputs. This multi-step nature makes generation computationally expensive, especially for high-dimensional data like videos. 

In pursuing a single-stage training for few-step inference, recent methods have focused on directly learning integrated trajectories rather than relying on ODE solvers. Consistency-based approaches (CT~\citep{song2023consistency}, CTM~\citep{kim2023consistency}, sCT~\citep{lu2024simplifying}) and trajectory matching methods like MeanFlow~\citep{geng2025mean} learn to predict or match trajectory derivatives. However, these methods lack explicit connections to distribution matching, a fundamental measure of generative model quality. While Inductive Moment Matching (IMM)~\citep{zhou2025inductive} addresses this gap by providing distribution-level guarantees through Maximum Mean Discrepancy, it requires multiple particles per training step, limiting scalability.

We propose Terminal Velocity Matching (TVM), a new framework for learning ground-truth trajectories of flow-based models in a single training stage. Instead of matching time derivatives at the initial time, TVM matches them at the \textit{terminal} time of trajectories. This conceptually simple shift yields powerful theoretical guarantees. We prove that our training objective upper bounds the $2$-Wasserstein distance between data and model distributions. Unlike IMM, our method provides distribution-level guarantees without requiring multiple particles. Our analysis also reveals a critical architectural limitation: current diffusion transformers~\citep{peebles2023scalable} lack Lipschitz continuity, which destabilizes TVM training. We address this with minimal architectural modifications, including RMSNorm-based QK-normalization and time embedding normalization. 


To make TVM practical at scale, we develop an efficient Flash Attention kernel that supports backward passes on Jacobian-Vector Products (JVP), crucial for our terminal velocity computation. Our implementation achieves up to 65\% speedup and significant memory reduction compared to standard PyTorch operations. We introduce a scaled parameterization where the network output naturally scales with the CFG weight $w$, allowing the model to handle varying guidance strengths more effectively. During training, we randomly sample CFG weights and directly incorporate them into our objective function with appropriate weighting ($1/w^2$) to prevent gradient explosion. This approach enables stable training across diverse guidance scales without requiring curriculum learning or specialized loss modifications, making TVM straightforward to implement and scale.

TVM achieves state-of-the-art results on ImageNet-$256\ttimes 256$, with 3.29 FID in single-step generation (outperforming  MeanFlow's~\citep{geng2025mean} with 3.43 FID) and matches/exceeds diffusion baselines with just 4 function evaluation steps (\ie, 1.99 FID for TVM vs. 2.27 FID for DiT). Similarly, our method surpasses diffusion baselines with 4-NFE on ImageNet-$512\ttimes 512$ (\ie 2.94 FID for TVM vs. 3.04 FID for DiT) while outperforming prior from-scratch methods such as sCT~\citep{lu2024simplifying} and MeanFlow on single-step generation. Our method naturally interpolates between one-step and multi-step sampling without retraining, requires no training curriculum or loss modifications, and maintains stability with simple architectures. Our construction provides new insights into building scalable one/few-step generative models with distributional guarantees, demonstrating that principled theoretical design can lead to practical improvements in both training stability and generation quality.
\section{Preliminaries: Flow Matching}

For a given data distribution $p_0(\rvx_0)$ and prior distribution $p_1(\rvx_1)$, Flow Matching (FM)~\citep{lipman2022flow,liu2022flow} constructs a time-augmented linear interpolation $\rvx_t$ between data $\rvx_0\in \R^D$ and prior $\rvx_1\in \R^D$ such that $\rvx_t = (1-t) \rvx_0 + t \rvx_1$\footnote{See \citet{lipman2022flow,albergo2023stochastic} for general path constructions.}. For each path $\rvx_t$ conditioned on a $(\rvx_0,\rvx_1)$ pair, there exists a conditional velocity $\rvv_t = \rvx_1 - \rvx_0$ for each $\rvx_t$. Under this definition, a ground-truth velocity field $\rvu: \R^D\times [0,1] \rightarrow \R^D$ marginal over all data and prior exists but is not known in analytical form. Therefore, a neural network $\rvu_\theta(\rvx_t,t)$ is used as approximation via loss
\begin{align}\label{eq:simple-fmloss}
    \gL_\text{FM}(\theta) = \E_{\rvx_t,\rvv_t,t}[\twonorm{ \rvu_\theta(\rvx_t ,t) - \rvv_t }^2]
\end{align}
for all $t\in [0,1]$ and $\rvx_t \sim p_t(\rvx_t)$ where $p_t(\rvx_t)$ denotes the marginal distribution over all data and prior. It can be shown that the minimizer $\theta_\text{min}$  implies $\rvu_{\theta_\text{min}}(\rvx_t ,t) = \rvu(\rvx_t, t)$ which can be used during inference to transport prior to data distribution by solving an ODE $\odv{}{t}\rvx_t = \rvu(\rvx_t,t)$. 

For each ground-truth $\rvu(\rvx_t, t)$, there exists a corresponding displacement map $\psi: \R^D\times [0,1]\times[0,1] \rightarrow \R^D$ (\ie flow map~\citep{boffi2024flow}) from any start time $t\in [0, 1]$ to an end time $s\in [0,1]$. It is defined as the ODE integral following $\rvu(\rvx_r, r)$ for all $r\in [s,t]$, \ie
\begin{align}
    \psi(\rvx_t, t, s) = \rvx_t + \int_t^s \rvu(\rvx_r, r)\ud r. \label{eq:ode-solution}
\end{align}
Empirically, $\rvu_\theta(\rvx_t,t)$ is used with classical ODE integration techniques such as the Euler method to produce samples. 
\section{Terminal Velocity Matching}
We propose Terminal Velocity Matching (TVM), a single-stage objective that directly learns the ODE integral in Eq.~\ref{eq:ode-solution}. By learning the transition between any two timesteps, TVM can generate high quality solutions in one step or few steps, while enjoying inference-time scaling.

Let $\rvf(\rvx_t,t,s):= \psi(\rvx_t, t, s) - \rvx_t$ denote the \textit{net} displacement of the velocity field. We observe that it must satisfy the following two conditions:
\begin{align}\label{eq:conditions}
        \circled{1}\quad \rvf(\rvx_t, t, s) =  \int_t^s \rvu(\rvx_r, r)\ud r \;,\quad\quad\quad\quad  \circled{2} \quad\odv{}{s}\rvf(\rvx_t,t,s) \Big\rvert_{s=t} = \rvu(\rvx_t, t).  
\end{align}
The first condition is the definition of net displacement and the second condition is true by differentiating both sides of the first condition \wrt $s$ evaluated at $s=t$. It explicitly relates the displacement map (with large time jump) to the marginal velocity field (with infinitesimal time jump), allowing us to interpolate between one-step sampling and ODE-like infinite-step sampling.

One of our key insights is that we can use a single two-time conditioned neural network $\rvF_\theta(\rvx_t, t, s)$ to learn both the one-step displacement sampler from $t$ to $s$ and the instantaneous velocity field. For simplicity, we let our model with learnable parameters $\theta$ be
\begin{align}
    \rvf_\theta(\rvx_t,t,s) = (s-t)\rvF_\theta(\rvx_t, t, s),\quad\quad\quad \rvu_\theta(\rvx_t,t)\defeq \odv{}{s}\rvf_\theta(\rvx_t,t,s)\Big\rvert_{s=t} = \rvF_\theta(\rvx_t, t, t)
\end{align}
where the scaling $(s-t)$ is chosen to satisfy integral boundary condition when $t=s$\footnote{This is similar to CTM~\citep{kim2023consistency}. See \appref{app:conditions} for conditions on general scaling factors.}. Condition \circled{2} can be easily enforced by FM loss (in \equref{eq:simple-fmloss}) and condition \circled{1} can be na\"ively enforced via the \textit{displacement error}
\begin{align}\label{eq:map-error}
    \gL_\text{displ}^t(\theta) \defeq \E_{\rvx_t}\left[ \twonorm{ \rvf_\theta(\rvx_t, t, 0) - \int_t^0 \rvu(\rvx_r, r)\ud r}^2 \right].
\end{align}
Once the above error is minimized to zero, one can obtain one-step samples by calling $\rvx_t +\rvf_\theta(\rvx_t,t,0)$ for any $\rvx_t\sim p_t(\rvx_t)$ at $t\in [0,1]$. 
However, this objective is infeasible because it requires ODE integration for each starting point $\rvx_t$. We address this challenge by proposing a simple sufficient condition to the network that bypasses explicit training-time ODE simulation.

\begin{figure}[t]
    \centering
    \includegraphics[width=0.95\linewidth]{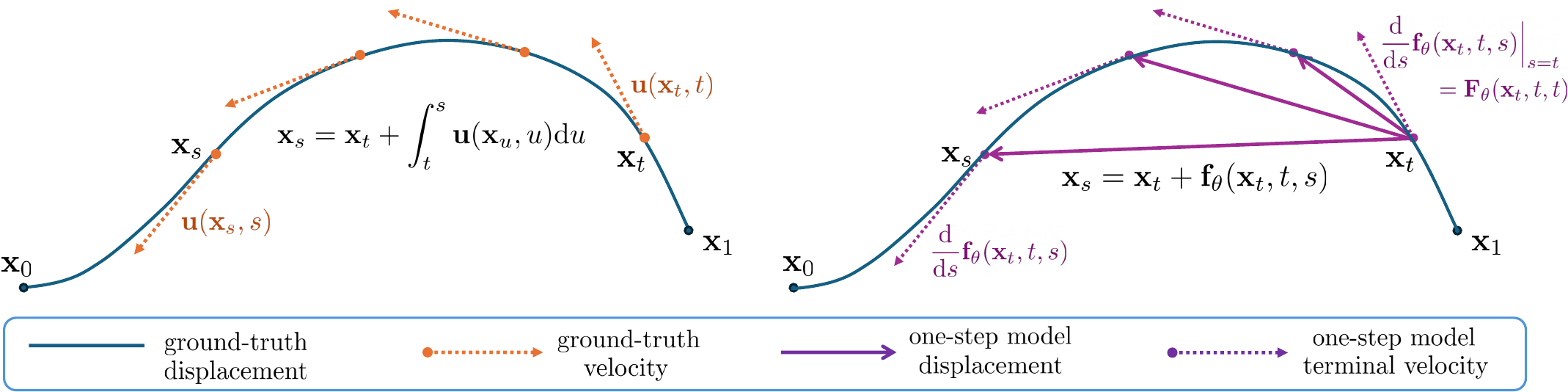}
    \caption{An illustration of Terminal Velocity Matching. Left shows the ground-truth displacement map by integrating the true velocity. Right shows our model path directly jumping between points on the ground-truth path in one step. In our method, the one-step generation $\rvx_0$ from $\rvx_t$ coincides with ground-truth $\rvx_0$ if the terminal velocity of model $\odv{}{s}\rvf(\rvx_t,t,s)$ coincides with ground-truth velocity $\rvu(\rvx_s,s)$ for all $s\in[0,t]$ along the true flow path (see \equref{eq:tve}). The terminal velocity condition is jointly satisfied with the boundary case when model displacement is $0$, where matching $\odv{}{s}\rvf(\rvx_t,t,s)\vert_{s=t}$ with $\rvu(\rvx_t,t)$ reduces to Flow Matching.}
    \label{fig:tvm}
\end{figure}

\mypara{Terminal Velocity Condition.} Explicit integration can be bypassed via differentiating \wrt integral boundaries. For the ground-truth net displacement $\rvf(\rvx_t, t, s)$ in condition \circled{1}, differentiating \wrt $s$ gives rise to the following condition on terminal velocity, \ie
\begin{align}
    \odv{}{s} \rvf(\rvx_t, t, s) = \rvu(\psi(\rvx_t , t ,s), s).
\end{align}
This condition is true for any ground-truth net displacement $\rvf$, and we show in \appref{app:tvcbound} that given $t\in[0,1]$ and our  parameterized map $\rvf_\theta(\rvx_t, t, s)$, 
 \begin{align}\label{eq:tve}
        \gL_\text{displ}^t(\theta) \leq   \int_0^t \E_{\rvx_t}\left[   \twonorm{   \odv{}{s} \rvf_\theta(\rvx_t, t, s) - \rvu(\psi(\rvx_t , t ,s), s)}^2  \right] \ud s .
    \end{align}
This result shows that the \textit{terminal velocity error} on the right hand side upper bounds the displacement
error, and so zero terminal velocity error implies that displacement from $t$ to $0$ matches exactly.
Moreover, it is easy to see that the terminal velocity error reduces to the marginal FM loss as $t\rightarrow s$ (see \appref{app:tvmreduce}). FM can thus be understood as matching a trajectory's terminal velocity when the net displacement is $0$. An illustration of our framework is shown in \figref{fig:tvm}. Despite the simplicity and generality, in practice, fulfilling this condition is still difficult due to the requirement of $\psi$ and $\rvu$. Fortunately, this issue can be effectively addressed using learned network as \textit{proxies}. 

\mypara{Learned networks as proxies.} Specifically, we propose the following approximation
\begin{align}
     \rvu(\psi(\rvx_t, t, s), s) \approx  \rvu_\theta(\rvx_t + \rvf_{\theta}(\rvx_t, t, s), s) 
\end{align} 
as proxies for the ground-truths. 
To properly guide the terminal velocity, $\rvu_\theta(\rvx_s, s)$ needs to first approximate the ground-truth $\rvu(\rvx_s, s)$ for any $\rvx_s$ and $s$. Therefore, the \textit{proxy terminal velocity error} can be jointly optimized with Flow Matching, which, as noted above, is a special boundary case of the terminal velocity error 
when displacement is $0$. We use the term ``Terminal Velocity Matching'' for this joint minimization of general and boundary-case velocity error, where the objective is
\begin{align}\label{eq:tvm-loss}
\begin{split}
    \gL_\text{TVM}^{t,s}(\theta) =   \E_{\rvx_t,\rvx_s,\rvv_s}\Bigg[   \underbrace{\twonorm{   \odv{}{s} \rvf_\theta(\rvx_t, t, s) - \rvu_\theta(\rvx_t + \rvf_{\theta}(\rvx_t , t ,s), s)}^2}_{\text{satisfies \circled{1}}}    +    \underbrace{\Big\lVert  \rvu_\theta(\rvx_s, s) - \rvv_s \Big\rVert_2^2}_{\text{satisfies \circled{2}}}  \Bigg]  
\end{split}
\end{align}
for each time $t\in [0,1]$ and $s\in [0,t]$. Intuitively, this objective leverages a single network to parameterize both the instantaneous velocity field and the displacement map, the former of which is learned from data to guide the learning of the latter. To provide further theoretical justification, in the following theorem, we formally establish a weighted integral of our objective as a proper upper bound on the $2$-Wasserstein distance between the data distribution $p_0(\rvx_0)$ and our model distribution $\rvf_{t\rightarrow 0}^\theta \# p_t(\rvx_t)$ pushforward from $p_t(\rvx_t)$ via our parameterized flow map. 

\begin{restatable}[Connection to the $2$-Wasserstein distance]{theorem}{main}\label{thm:main}
  Given $t\in [0,1]$, let $\rvf_{t\rightarrow 0}^\theta \# p_t(\rvx_t)$ be the distribution pushforward from  $p_t(\rvx_t)$ via $\rvf_\theta(\rvx_t, t, 0)$, and assume $\rvu_\theta(\cdot, s)$ is Lipschitz-continuous for all $s\in [0,t]$ with Lipschitz constants $L(s)$, with additional mild regularity conditions, 
 \begin{align} 
     W_2^2(\rvf_{t\rightarrow 0}^\theta \# p_t , p_0) \leq \int_0^t \lambda[L](s) \gL_\mathrm{TVM}^{t,s}(\theta) \ud s + C,
 \end{align}
 where $W_2(\cdot,\cdot)$ is $2$-Wasserstein distance, $\lambda[\cdot]$ is a functional of $L(\cdot)$, and $C$ is a non-optimizable constant.
\end{restatable}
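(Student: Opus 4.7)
The plan is to couple the two distributions through the ground-truth flow, reduce the Wasserstein distance to the displacement error, and then bound that error by the TVM objective using Lipschitz continuity of $\rvu_\theta$ together with a Gr\"onwall-type argument. Since $\psi(\cdot, t, 0) \# p_t = p_0$ by definition of the flow map, $(\rvx_t + \rvf_\theta(\rvx_t, t, 0),\, \psi(\rvx_t, t, 0))$ with $\rvx_t \sim p_t$ forms a valid coupling between $\rvf_{t \to 0}^\theta \# p_t$ and $p_0$. The definition of the $2$-Wasserstein distance then gives
\begin{equation*}
    W_2^2(\rvf_{t \to 0}^\theta \# p_t, p_0) \;\leq\; \E_{\rvx_t}\|\rvx_t + \rvf_\theta(\rvx_t, t, 0) - \psi(\rvx_t, t, 0)\|^2 \;=\; \gL_\text{displ}^t(\theta),
\end{equation*}
and Eq.~\ref{eq:tve} bounds this by an integral of true terminal velocity errors over $s \in [0, t]$.

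Second, into the integrand I would insert the telescoping identity
\begin{equation*}
    \odv{}{s}\rvf_\theta - \rvu(\psi, s) = [\odv{}{s}\rvf_\theta - \rvu_\theta(\rvx_t + \rvf_\theta, s)] + [\rvu_\theta(\rvx_t + \rvf_\theta, s) - \rvu_\theta(\psi, s)] + [\rvu_\theta(\psi, s) - \rvu(\psi, s)]
\end{equation*}
and apply $(a+b+c)^2 \leq 3(a^2+b^2+c^2)$. The first piece in expectation is exactly the first term of $\gL_\text{TVM}^{t,s}(\theta)$; the second, by Lipschitz continuity of $\rvu_\theta(\cdot, s)$ with constant $L(s)$, is bounded by $L(s)^2 E(s)$, where $E(s) \defeq \E\|\rvx_t + \rvf_\theta(\rvx_t, t, s) - \psi(\rvx_t, t, s)\|^2$; the third, since $\psi(\cdot, t, s)\# p_t = p_s$, reduces to the marginal squared error $\E_{\rvx_s \sim p_s}\|\rvu_\theta(\rvx_s, s) - \rvu(\rvx_s, s)\|^2$, which by the standard Flow Matching variance decomposition equals the FM loss $\E\|\rvu_\theta(\rvx_s, s) - \rvv_s\|^2$ minus an irreducible, $\theta$-independent conditional variance $C(s) \defeq \E[\Var(\rvv_s \mid \rvx_s)]$.

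The Lipschitz piece is self-referential in $E(s)$, so to close the recursion I would generalize the argument behind Eq.~\ref{eq:tve} (fundamental theorem of calculus plus Cauchy-Schwarz, applied pathwise to $\rvf_\theta(\rvx_t, t, \cdot) - (\psi(\rvx_t, t, \cdot) - \rvx_t)$, which vanishes at $s = t$) to arbitrary $s$, yielding $E(s) \leq (t - s)\int_s^t \E\|\odv{}{r}\rvf_\theta - \rvu(\psi, r)\|^2 \ud r$. Combined with the three-term bound, this produces a Volterra-type integral inequality $E(s) \leq G(s) + 3(t-s)\int_s^t L(r)^2 E(r) \ud r$, where $G(s)$ is a weighted integral of $\gL_\text{TVM}^{t,r}(\theta) - C(r)$ over $r \in [s, t]$. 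Applying the generalized Gr\"onwall inequality and specializing to $s = 0$ expresses $E(0)$ — and hence $W_2^2$ — as a weighted integral of $\gL_\text{TVM}^{t,s}(\theta)$ plus a $\theta$-independent remainder absorbing all $C(s)$ contributions, yielding the claimed $\lambda[L](s)$ and $C$. The main obstacle is this last step: carefully unrolling the Gr\"onwall kernel so that $\lambda[L]$ depends cleanly only on $L$ (and $t$), and verifying that the stated mild regularity conditions — continuous differentiability of $\rvf_\theta$ in $s$, existence and uniqueness of $\psi$, and finite second moments of all relevant quantities — suffice to justify the pathwise identities and the expectation-integral exchanges throughout.
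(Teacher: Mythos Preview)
Your proposal is correct and mirrors the paper's proof: the same flow-map coupling to reduce $W_2^2$ to the displacement error, the same three-term telescoping of the terminal-velocity residual into the proxy TVM term, a Lipschitz-controlled self-referential piece, and the marginal FM error (rewritten via the standard variance decomposition to produce the constant $C$), closed by a Gr\"onwall-type argument. The only bookkeeping difference is that the paper sets $y(s)=\int_s^t \varepsilon(\rvx_t,t,u)\,\ud u$ pointwise and solves the resulting linear differential inequality $-\dot y \le \delta + L(s)\,y$ via an integrating factor, obtaining $\lambda[L](s)=\exp\bigl(\int_0^s L(u)\,\ud u\bigr)$ explicitly, whereas your $E(s)$-based Volterra formulation is equivalent but leaves the weight implicit.
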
 

\mypara{Training objective.} The theorem relates our per-time objective to distribution divergence. However, for practicality, we avoid computation of the above weighting function and instead choose to randomly sample both $t$ and $s$ via distribution $p(s,t)$ such that 
\begin{align}
     \gL_\text{TVM}(\theta) = \E_{t,s}\left[\gL_\text{TVM}^{t,s}(\theta)   \right]
\end{align}
where notably $ \gL_\text{TVM}(\theta)$ reduces to Flow Matching objective when $t=s$ (see \appref{app:reducetofm}). In practice, we employ a biased estimate of the above objective by using exponentially averaged (EMA) weights and stop-gradient for our proxy networks~\citep{li2023self}. The biased per-time objective $ \hat{\gL}_\text{TVM}^{t,s}(\theta)$ is
\begin{align}
     \E_{\rvx_t,\rvx_s,\rvv_s}\left[  \twonorm{ \odv{}{s} \rvf_\theta(\rvx_t, t, s) -  \rvu_{\theta_\text{sg}^*}(\rvx_t + \rvf_{\theta_\text{sg}}(\rvx_t , t ,s), s) }^2 \1_{t\neq s}   +  \Big\lVert   \rvu_\theta(\rvx_s, s) - \rvv_s\Big\rVert_2^2 \right]
\end{align}
where  $\theta_\text{sg}$ and  $\theta_\text{sg}^*$ are the stop-grad weight and stop-grad EMA weight of $\theta$, and $\1_{t\neq s}$ is 0 when $t=s$ and 1 otherwise to ensure the constraint to reduce to FM loss when $t=s$.

\mypara{Classifier-free guidance (CFG).} In the case of class-conditional generation. The ground-truth velocity field is replaced by a linear combination of class-conditional velocity $\rvu(\rvx_r, r, c)$ and unconditional velocity $\rvu(\rvx_r, r)$~\citep{ho2022classifier}, such that the new displacement map is
\begin{align}\label{eq:cfg-map}
     \psi_w(\rvx_t, t, s, c) = \rvx_t + \int_t^s \left[ w\rvu(\rvx_r, r, c) + (1-w)\rvu(\rvx_r, r) \right] \ud r ,
\end{align}
where $w$ is the CFG weight, $c$ is class and $\varnothing$ denotes empty label. To train with CFG, we additionally condition the network on $w$ and $c$, and our class-conditional map is $\rvf_\theta(\rvx_t, t, s, c, w) =  (s-t)\rvF_\theta(\rvx_t, t, s, c, w)$ where the additional $w$ scale is chosen due to linear scaling in magnitude for marginal velocity \wrt $w$. The instantaneous velocity $\rvu_\theta(\rvx_s,s,c,w)$ is regressed against conditional velocity $w\rvv_t + (1-w)\rvu(\rvx_r, r)$ where we can approximate $\rvu(\rvx_r, r)$ with our own network~\citep{chen2025visual}. The per-time and per-class Flow Matching term can be modified as
\begin{align}\label{eq:cfg-fmloss}
    \hat{\gL}_\text{FM}^{s,c,w}(\theta) =  \E_{\rvx_s, \rvv_s}\left[  \twonorm{ \rvu_\theta(\rvx_s,s,c,w) -   \left[w\rvv_s + (1-w)\rvu_{\theta_\text{sg}^*}(\rvx_s,s,\varnothing,1)  ) \right] }^2 \right] ,
\end{align}
where $\theta_\text{sg}^*$ denotes EMA weights.  We show in \appref{app:cfg-target} that the minimizer of this objective coincides with the ground-truth CFG velocity in \equref{eq:cfg-map}. Our class-conditional objective $\hat{\gL}_\text{TVM}^{t,s,w}(\theta)$ can be modified as
\begin{align}\label{eq:biased-guided-tvm}
      \frac{1}{w^2}  \E_{\rvx_t,c}\left[  \twonorm{ \odv{}{s} \rvf_\theta(\rvx_t, t, s, c,w) - \rvu_{\theta_\text{sg}^*}(\rvx_t + \rvf_{\theta_\text{sg}}(\rvx_t , t ,s, c, w), s, c, w) }^2  \1_{t\neq s}    +   \hat{\gL}_\text{FM}^{s,c,w}(\theta)\right]  .
\end{align}
The weighting $1/w^2$ is to prevent exploding gradients because the magnitude of ground-truth velocity scales linearly with $w$. Final objective simply samples each of $t,s,w$ under some distribution $p(t,s)p(w)$ and computes the above loss in expectation. We randomly set $c=\varnothing$ with some probability (\eg 10\%) and for each $c=\varnothing$ we set $w=1$. Our training algorithm is shown in \algref{alg:training-algo}.

\begin{wrapfigure}{r}{0.55\textwidth}
\vspace{-\intextsep} 
  \centering
  \begin{python} 
  def sampling(net, x, n, c, w):  
     ts = torch.linspace(1, 0, n+1)
     for t,s in zip(ts[:1],ts[1:]):
        x = x + (s-t) * net(x, t, s, c, w) 
     return x
  \end{python}  
    \vspace{-4pt}
  \caption{PyTorch-style sampling code.}\label{fig:sampling-code}
    \vspace{-3.\baselineskip}
\end{wrapfigure}
\mypara{Sampling.}  Our construction can naturally interpolate between one-step and $n$-step sampling. See \figref{fig:sampling-code} for PyTorch-style sampling code.

\section{Practical Challenges}\label{sec:practical}

We note and address several challenges to practically implement our objective.

\mypara{Semi-Lipschitz control.} \thmref{thm:main} makes the crucial assumption that $\rvu_\theta(\rvx_s, s)$ is Lipschitz continuous. However, modern transformers with scaled dot-product attention (SDPA) and LayerNorm (LN,~\citet{ba2016layer}) are \textit{not} Lipschitz continuous~\citep{kim2021lipschitz,qi2023lipsformer,castin2023smooth}. This issue similarly applies to diffusion transformers (DiT)~\citep{peebles2023scalable}. Our insight is to make minimal and non-restrictive changes to the architecture for Lipschitz control.

 \begin{wrapfigure}{r}{0.35\textwidth} 
\vspace{-\intextsep}
\begin{minipage}[t]{\linewidth}
    \vspace{0pt}
        \centering
        \includegraphics[width=\textwidth]{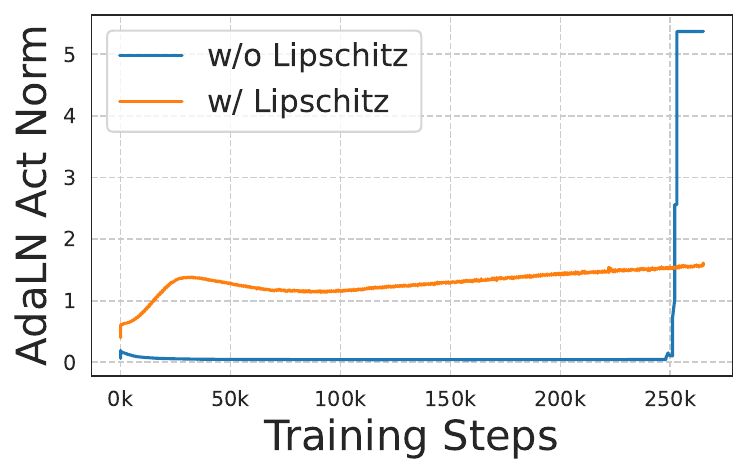}
        \caption{Activation norm of last time embedding layer. Same trends follow for all other layers.}\label{fig:lipschitz} 
    \end{minipage}
\vspace{-0.3\intextsep}
\end{wrapfigure}
 As shown in \figref{fig:lipschitz}, the original DiT experiences training instability leading to steep jump in network activations. As a solution, we adopt RMSNorm as QK-Norm, which coinsides with the proposed $\gL_2$ QK-Norm~\citep{qi2023lipsformer} with learnable scaling and is provably Lipschitz continuous. We also substitute all LN with RMSNorm (without learnable parameters, denoted as $\text{RMSNorm}^-(\cdot)$), whose Lipschitzness we show in \appref{app:rmsnorm}. In addition, DiT introduces Adaptive LayerNorm (AdaLN) where the output of RMSNorm is modulated by MLP outputs of time embeddings denoted as $\text{RMSNorm}^-(x) \odot a(t) + b(t)$ where $x$ is the input feature and $a(t),b(t)$ are scale and shift respectively. However, the Lipschitz constant of this layer depends on the magnitude of $a(t)$ which can grow unbounded and is subject to instability. We therefore employ $\text{RMSNorm}^-(\cdot)$ again on all modulation parameters for 
\begin{align}
    \text{AdaLN}(x, t) = \text{RMSNorm}^-(x) \odot \text{RMSNorm}^-(a(t)) + \text{RMSNorm}^-(b(t)).
\end{align}  
\figref{fig:lipschitz} also shows the activation with our proposed changes. Activations stay smooth after our fixes. Finally, we follow~\citet{qi2023lipsformer} and use Lipschitz initialization for all linear layers except for time embedding layers. Note that these modifications do not explicitly constrain the Lipschitz constants of all but the key layers where instability can arise. We find such partial control of the Lipschitzness is sufficient for empirical success.

\mypara{Flash Attention JVP with backward pass.} The training objective involves the time derivative of our map $\rvf_\theta(\rvx_t,t,s)$, which can be derived as
\begin{align}
    \odv{}{s}\rvf_\theta(\rvx_t,t,s) = 
    \rvF_\theta(\rvx_t,t,s) + (s-t)\partial_s\rvF_\theta(\rvx_t,t,s)
\end{align}
where the last term involves differentiating through the network with Jacobian-Vector Product (JVP). This poses significant challenge for transformers because automatic differentiation packages, \eg PyTorch, often do not efficiently handle JVP of SDPA. Open-source Flash Attention~\citep{dao2022flashattention} also has limited support for JVP. Crucially, different from prior works~\citep{lu2024simplifying,geng2025mean,sabour2025align}, gradient is also propagated through the JVP term $\partial_s\rvF_\theta(\rvx_t,t,s)$. To tackle these challenges, we propose an efficient Flash Attention kernel that (i) fuses JVP with forward pass, (ii) uses significantly less memory than na\"ive PyTorch attention, and (iii) supports backward pass on JVP results. We detail the implementation in \appref{app:jvp}.

\begin{wrapfigure}{r}{0.37\textwidth}  
\vspace{-\intextsep}
\begin{minipage}[t]{\linewidth}
    \vspace{0pt}
        \centering
        \includegraphics[width=\textwidth]{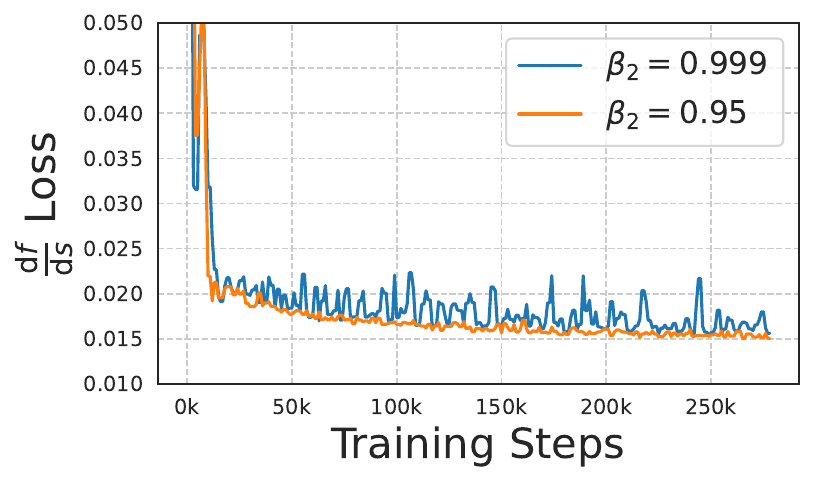} 
        \caption{Smoother terminal velocity error with $\beta_2=0.95$.}\label{fig:beta2} 
    \end{minipage}
\vspace{-\intextsep}
\end{wrapfigure}
\mypara{Optimizer parameter change.} Due to higher-order gradient through JVP, our loss can be subject to fluctuation with the default AdamW $\beta_2=0.999$. We take inspiration from language models~\citep{touvron2023llama} for mitigation and use $\beta_2=0.95$ to speed up update of the gradient second moment. As show in \figref{fig:beta2}, the terminal velocity error fluctuates significantly less after $\beta_2$ change.


\mypara{Scaled parameterization.} The ground-truth CFG velocity scales linearly in magnitude with $w$, so using neural networks to directly predict the velocity may be suboptimal. We therefore additionally investigate a simple scaled alternative as $\rvf_\theta(\rvx_t, t, s, c, w) =  (s-t)w\rvF_\theta(\rvx_t, t, s, c, w)$ so that $\rvu_\theta(\rvx_s,s,c,w) =w\rvF_\theta(\rvx_s, s, s, c, w)$ which scales with $w$ by design. We study the effect of this parameterization in experiments.

\mypara{Different time distribution for FM loss.} We find it empirically helpful to use a separate distribution to sample different $s$ specifically for the FM loss (see \equref{eq:biased-guided-tvm})
This is because we can directly transfer the proven successful time distribution from FM training for TVM. How this is used can be found in \algref{alg:training-algo} and we ablate this decision in \secref{sec:ablation}.

\section{Connection to Prior Works}

\mypara{MeanFlow.} MeanFlow~\citep{geng2025mean} minimizes loss $E_{\rvx_t,t,s}\left[\twonorm{\rvF_\theta(\rvx_t,t,s) - F_\text{tgt} }^2\right]$ where
\begin{align}
    F_\text{tgt} = \rvu(\rvx_t,t) + (s-t) \Big[\rvu(\rvx_t,t)\cdot\nabla_{\rvx_t}\rvF_{\theta_\text{sg}}(\rvx_t,t,s) + \partial_t\rvF_{\theta_\text{sg}}(\rvx_t,t,s) \Big]  
\end{align}
This loss can be equivalently rewritten as $E_{\rvx_t,t,s}\left[\twonorm{\odv{}{t} \rvf_\theta(\rvx_t, t, s) + \rvu(\rvx_t, t)}^2\right]$ where $\rvf_\theta(\rvx_t, t, s) = (s-t)\rvF_\theta(\rvx_t,t,s)$ and loss is minimized if and only if $\odv{}{t} \rvf_\theta(\rvx_t, t, s) = - \rvu(\rvx_t, t)$  (see \appref{app:meanflow}). This exhibits duality with our proposed method in that we enforce a differential condition \wrt  $s$ while MeanFlow differentiates \wrt $t$ which requires $\rvu(\rvx_t, t)$ to be propagated through JVP. In practice, $\rvu(\rvx_t, t)$ is replaced with $\rvv_t$, which introduces additional variance during training and can cause fluctuation in gradient, especially under random CFG during training (see \secref{sec:rand_cfg}). Additionally, the relationship between the loss and distribution divergence remains elusive with the introduction of $\rvv_t$. In contrast, we show our loss upper bounds $2$-Wasserstein distance, which provides unique insights on smoothness properties of our network.

\mypara{Physics Informed Distillation (PID).} PID~\citep{tee2024physics} as inspired by Physics Informed Neural Networks~\citep{raissi2019physics,cuomo2022scientific} distills pretrained diffusion models $\rvu_\phi(\rvx_t,t)$ into one-step samplers. It parameterizes the one-step net displacement as $\rvf_\theta(\rvx_1, s) = (s - 1)\rvu_\theta(\rvx_1, s)$ where $\rvx_1\sim p_1(\rvx_1)$ and trains via distillation loss 
\begin{align}
    \E_{\rvx_1,s}\left[\twonorm{\odv{}{s}\rvf_\theta(\rvx_1, s) - \rvu_\phi(\rvx_1 + \rvf_{\theta_\text{sg}}(\rvx_1, s) ,s)}^2\right]
\end{align}
Our method generalizes the setting by introducing the starting time $t$ in addition to the terminal time $s$. Under this view, PID sets $t=1$ and can only generate one-step samples. We additionally show in \secref{sec:ablation} that na\"ive combination of PID and FM loss suffers from optimization instability and a continuous distribution on $t$ is necessary for empirical success.

\section{Related Works}

\mypara{Diffusion and Flow Matching.} Diffusion models~\citep{sohl2015deep,ho2020denoising,song2020score} learn generative models by reversing stochastic processes, while Flow Matching~\citep{liu2022flow,lipman2022flow} generalizes this to arbitrary priors with simplified training. Both approaches ultimately solve ODEs with neural networks during sampling.

\mypara{One-Step and Few-Step Models from Scratch.} To address slow inference from ODE simulation, recent methods aim for few-step generation in a single training stage. Consistency models~\citep{song2023consistency,lu2024simplifying} parameterize networks to represent ODE integrals but cannot jump between arbitrary timesteps without injecting additional noise, which can limit multi-step performance. Two-time conditioned approaches enable arbitrary timestep transitions: IMM~\citep{zhou2025inductive} provides distribution consistency via Maximum Mean Discrepancy but requires multiple particles; MeanFlow~\citep{geng2025mean} and Flow Map Matching~\citep{boffi2024flow} match trajectory derivatives but lack distributional guarantees. Other variants bypass differentiation via Monte Carlo~\citep{liu2025learning} or combine distillation with FM~\citep{frans2024one}. 

Unlike these methods, TVM regularizes path behavior at the \textit{terminal time} rather than initial time and provides explicit $2$-Wasserstein bounds. While sCT and MeanFlow only compute forward JVP, TVM uniquely supports backward passes through the JVP computation, 
These innovations drive both our theoretical insights and architectural improvements. Another recent work \citep{boffi2025build} has proposed a unifying perspective on training flow maps from scratch; it encompasses many approaches such as Shortcut~\citep{frans2024one}, CM~\citep{song2023consistency,lu2024simplifying}, MeanFlow~\citep{geng2025mean} and shares similar  theoretical insights as TVM.
\section{Experiments}
We investigate how well TVM can generate natural images (\secref{sec:imggen}), discuss its advantages compared to previous methods (\secref{sec:rand_cfg}), ablate various practical choices (\secref{sec:ablation}) and discuss its computation cost (\secref{sec:cost}).

\begin{figure*}[t]
    \centering
        \includegraphics[width=0.95\textwidth]{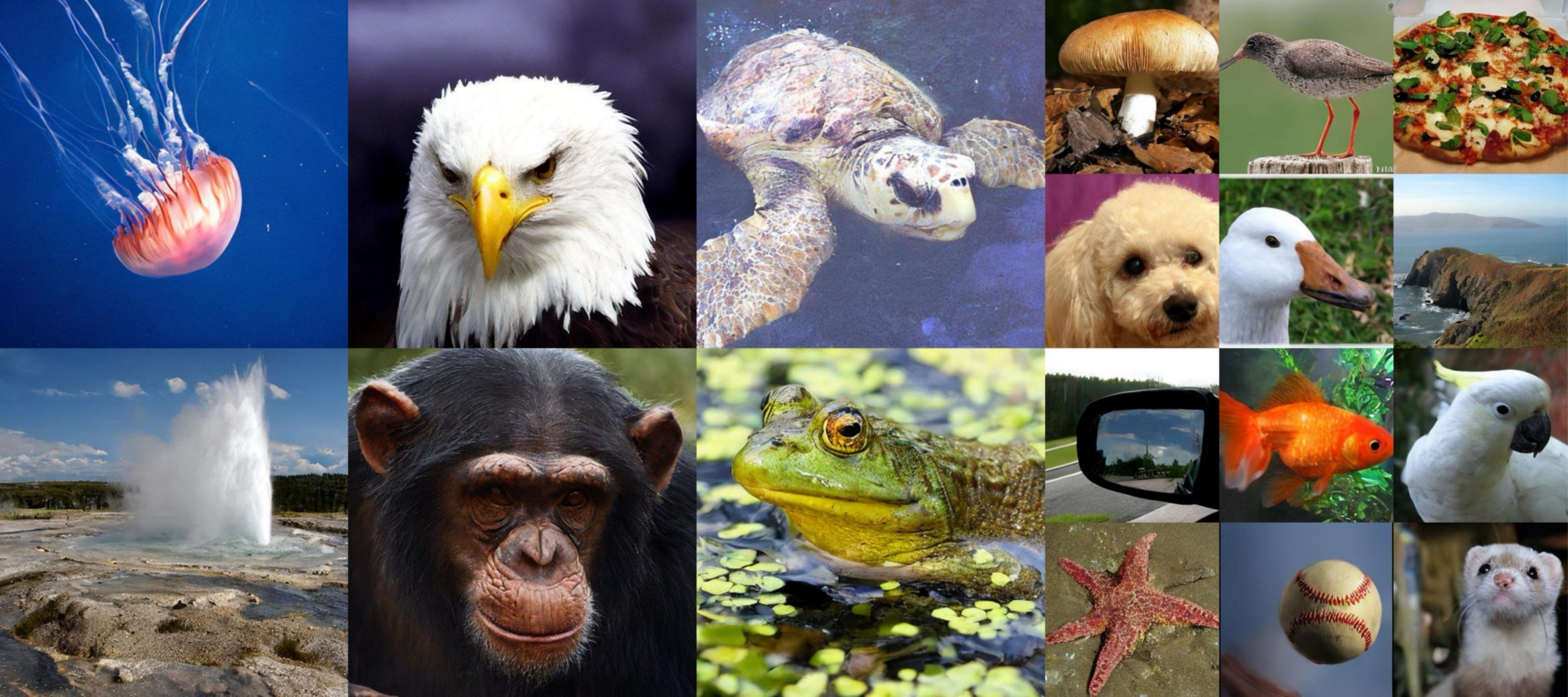} 
        \caption{One-step samples from TVM on both ImageNet-$512\ttimes 512$ and ImageNet-$256\ttimes 256$.}\label{fig:samples}
    \vspace{-8pt}
\end{figure*}

\subsection{Image Generation}\label{sec:imggen}

\mypara{ImageNet-256$\ttimes$256.} We present quantitative results in \tabref{tab:fid} under FID~\citep{heusel2017gans}. We adopt the default DiT-XL/2 architecture~\citep{peebles2023scalable} and inject $t-s$ as the second timestep, following IMM~\citep{zhou2025inductive} and MeanFlow~\citep{geng2025mean}. We additionally employ our semi-Lipschitz control techniques for training stability or we notice activation explosion as described in~\figref{fig:lipschitz}, and we train with constantly sampled CFG, \ie models with $w=2$ and $w=1.75$ are two different models trained from scratch. We describe additional training details in~\appref{app:exp}. Our method achieves state-of-the-art 1-NFE FID among methods trained from scratch, outperforming MeanFlow and IMM. With CFG $w=2$, TVM can achieve noticeable improvements over MeanFlow, e,g, 3.29 FID vs. 3.43 FID for 1-NFE and similarly for 2-NFE. With 4 NFEs, $w=1.75$ also exceeds 500-NFE diffusion baselines. We additionally show qualitative 1-NFE samples on the right of \figref{fig:samples}.

\mypara{ImageNet-512$\ttimes$512.} We train with the same settings as in $256\ttimes 256$-resolution and we show the FID scores in~\tabref{tab:fid-512}. We rerun MeanFlow using the same settings as in ImageNet-$256\ttimes 256$ as our baseline in addition to sCT~\citep{lu2024simplifying} under similar model sizes. TVM again outperforms sCT and MeanFlow in 1-NFE and 2-NFE regime. Notably, TVM-XL/2 outperforms sCT-XL with 1.1B parameters, highlighting TVM's more optimal use of model capacity in fitting the image distribution. Moreover, with $w=2.25$, TVM with 4-NFE can match 500-NFE DiT-XL/2 baseline in performance, further demonstrating the scalability of our algorithm to higher resolution.
 
Intriguingly, for both datasets, TVM trained with higher CFG  performs better on 1 NFE while worse on 2 NFEs. We believe this implies a fundamental trade-off between different NFE quality and that the network is limited in capacity in fitting all NFEs well. We leave more detailed studies on this trade-off and any design improvements to future work.

\begin{figure}[t]
    \centering
    \hfill
 \begin{minipage}[t]{0.49\textwidth}
 \vspace{0pt}
 \centering
            \begin{adjustbox}{width=\linewidth}
 \begin{tabular}{lccc}
    \toprule
         &  NFE ($\downarrow$) & FID ($\downarrow$) & \# Params. \\
         \midrule[0.25ex]
         \multicolumn{4}{l}{Diffusion/Flow} \\
         \midrule 
         ADM~\citep{dhariwal2021diffusion}  &  250$\times$2  & 10.96 & 554M \\
         LDM-4-G~\citep{rombach2022high} &  250$\times$2  & 3.60 & 400M \\
                DiT-XL/2~\citep{peebles2023scalable} ($w=1.25$)  & 250$\times$2  &  3.22  & 675M \\ 
                DiT-XL/2~\citep{peebles2023scalable} ($w=1.5$)  & 250$\times$2  &  2.27  & 675M \\ 
              SiT-XL/2~\citep{ma2024sit} ($w=1.5$)   & 250$\times$2   &  2.15  & 675M \\
         \midrule[0.25ex]
         \multicolumn{4}{l}{One/Few-Step from Scratch} \\
         \midrule
        iCT-XL/2~\citep{song2023improved}  & 1  &  34.24          & 675M \\
                                      & 2   &   20.3   & 675M \\ 
             Shortcut-XL/2~\citep{frans2024one}  & 1  &  10.60  & 675M \\ 
               IMM-XL/2~\citep{zhou2025inductive}    & $1\times 2$    &  8.05 & 675M \\
                                  & $2\times 2$             &   3.99   & 675M \\ 
                                  & $2\times 4$             &   2.51   & 675M \\ 
             MeanFlow-XL/2 ~\citep{geng2025mean}   & 1  &  3.43  & 676M \\
                          & 2   &   2.93   & 676M \\
             \textbf{TVM-XL/2 (Ours) } ($w=2$)  & 1  & \textbf{3.29}  & 678M  \\
               & 2 & 2.80 & 678M  \\  
               \textbf{TVM-XL/2 (Ours) } ($w=1.75$)  & 1  & 4.58  & 678M  \\
               & 2 & \textbf{2.61} & 678M  \\  
               & 4 & \textbf{1.99} & 678M  \\  
         \bottomrule
    \end{tabular}
 \end{adjustbox}
  \captionof{table}{FID results on ImageNet-$256\ttimes 256$. }
  \label{tab:fid} 
 \end{minipage}  
 \hfill
 \begin{minipage}[t]{0.46\textwidth}
 \vspace{0pt}
 \centering
            \begin{adjustbox}{width=\linewidth}
 \begin{tabular}{lccc}
    \toprule
         &  NFE ($\downarrow$) & FID ($\downarrow$) & \# Params. \\
         \midrule[0.25ex]
         \multicolumn{4}{l}{Diffusion/Flow} \\
         \midrule  ADM-G~\citep{dhariwal2021diffusion}  & 250$\times$2  &  7.72  & 559M \\  
         SimDiff~\citep{hoogeboom2023simple}  & 512$\times$2  &  3.02   & 2B \\ 
         VDM++~\citep{kingma2024understanding}  & 512$\times$2  &  2.65   & 2B \\ 
         U-ViT-H/4~\citep{bao2023all}  & 250$\times$2  &  4.05  & 501M \\ 
                EDM2-L~\citep{karras2024analyzing}  & 63$\times$2  &  1.88  & 778M \\ 
                EDM2-XL~\citep{karras2024analyzing}  & 63$\times$2  &  1.85  & 1.1B \\ 
                DiT-XL/2~\citep{peebles2023scalable} ($w=1.25$)  & 250$\times$2  &  4.64  & 675M \\ 
                DiT-XL/2~\citep{peebles2023scalable} ($w=1.5$)  & 250$\times$2  &  3.04  & 675M \\ 
              SiT-XL/2~\citep{ma2024sit}  ($w=1.5$)  & 250$\times$2  &   2.62  & 675M \\ 
         \midrule[0.25ex]
         \multicolumn{4}{l}{One/Few-Step from Scratch} \\
         \midrule
        sCT-L~\citep{lu2024simplifying}  & 1  &           5.15  & 778M \\
                                      & 2   &   4.65   & 778M \\ 
             sCT-XL~\citep{lu2024simplifying}  & 1  &         4.33    & 1.1B \\
                                      & 2   &   3.73   & 1.1B \\ 
             MeanFlow-XL/2~\citep{geng2025mean}   & 1  &  5.24   & 676M \\
                          & 2   &   3.17   & 676M \\
             \textbf{TVM-XL/2 (Ours) } ($w=2.50$)   & 1  & \textbf{4.32} & 678M  \\
               & 2 &  \textbf{3.50}  & 678M  \\  
             \textbf{TVM-XL/2 (Ours) } ($w=2.25$)   & 1  & 5.37 & 678M  \\
             & 2  & 3.89  & 678M  \\
               & 4 &   \textbf{2.94}  & 678M  \\   
         \bottomrule
    \end{tabular}
 \end{adjustbox}
  \captionof{table}{FID results on ImageNet-$512\ttimes 512$. }
  \label{tab:fid-512} 
 \end{minipage} 
 \hfill
  \vspace{-4pt}
\end{figure}

\begin{figure}[t]
    \centering
    \begin{subfigure}[t]{0.28\textwidth}
        \centering
        \includegraphics[width=\linewidth]{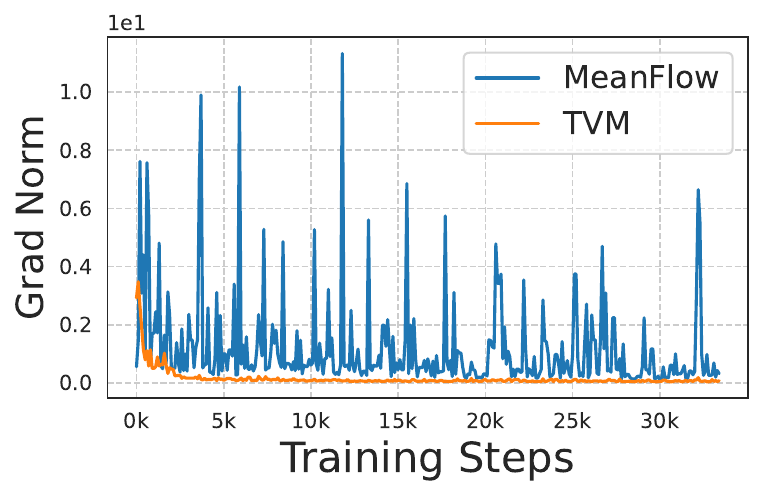}
    \end{subfigure}
    \hfill
    \begin{subfigure}[t]{0.28\textwidth}
        \centering
        \includegraphics[width=\linewidth]{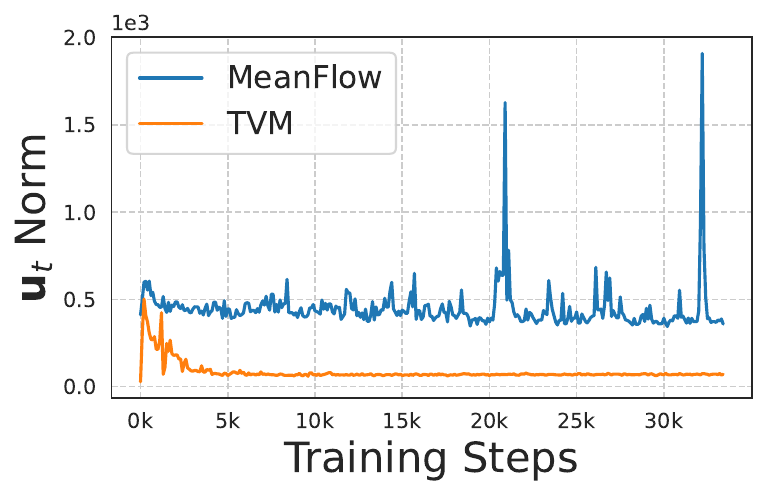}
    \end{subfigure}
    \hfill
    \begin{subfigure}[t]{0.4\textwidth}
        \centering
        \includegraphics[width=\linewidth]{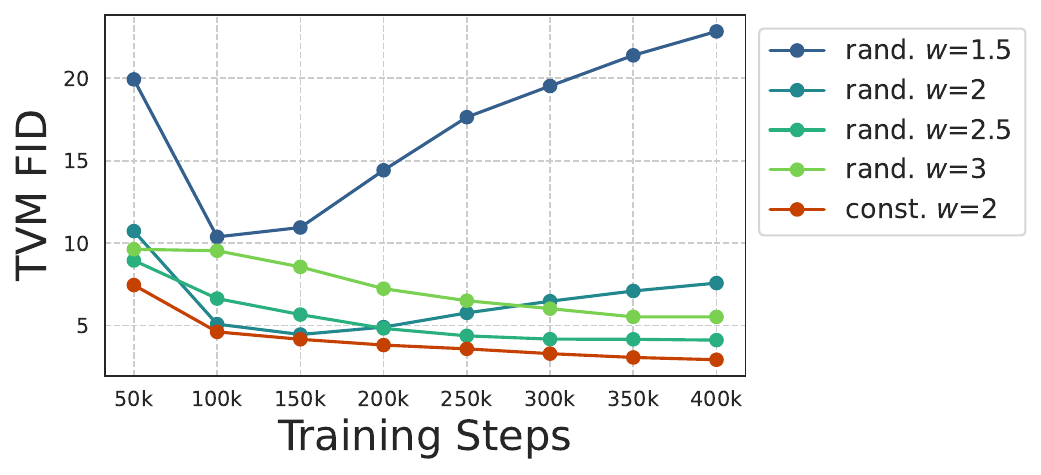}
    \end{subfigure}
    \caption{%
        \textit{(Left)}~MeanFlow is subject to wide variation in gradient norm if CFG scales (i.e., $\kappa$ and $\omega$) are randomly sampled under na\"ive settings (see \appref{app:meanflowcfg} for details). TVM shows much smoother gradient norm.
        \textit{(Middle)}~MeanFlow's gradient norm is strongly correlated with the fluctuation of $\lVert\mathbf{u}(\mathbf{x}_t,t)\rVert$. TVM's $\lVert\mathbf{u}(\mathbf{x}_t,t)\rVert$ is much more stable under the same CFG setting.
        \textit{(Right)}~Our method converges with random CFG at training time, although tradeoff exists between different CFG in FID. Constantly sampled CFG works best.%
    }
    \label{fig:cfg_study}
    \vspace{-10pt}

\end{figure}

\subsection{Discussion on Training Advantages}\label{sec:rand_cfg} 

\mypara{Single sample objective.} Unlike IMM~\citep{zhou2025inductive} which uses more than 4 samples to calculate its loss, we use a single sample to for loss calculation without losing a distribution-matching interpretation. This also allows the objective to be scaled to large models and high-dimensional datasets where batch size on each GPU is constrained to be 1.

\mypara{Training with random CFG.} Our construction allows us to randomly sample CFG scale during training without collapse. We attribute this stability to our JVP being only calculated \wrt $s$ which is invariant to starting position $\rvx_t$ and time $t$. In contrast, CT~\citep{song2023consistency,lu2024simplifying} and MeanFlow~\citep{geng2025mean} require velocity $\rvu(\rvx_t,t)$ to be used in the JVP calculation. In the case of random CFG, this velocity can vary widely in magnitude which, if propagated through JVP, can cause wide fluctuation in gradient norm (see left two in \figref{fig:cfg_study}) and causes training instability. Our method, in comparison, enjoys much smoother gradient norm and $\rvu(\rvx_t,t)$ norm, and successfully converges even in the presence of random CFG. We note that random sampling of CFG is not optimal as some CFG scales experience degradation in FID during training, and constant CFG performs better in comparison. We postulate that the under-performance of random CFG is due to limited capacity of the network and the $1/w^2$ factor that downweights high CFG. This phenoemenon is similarly observed in CFG-conditioned FM training (see \appref{app:cfgfm}) and we leave any improved design to future work. 

 \mypara{No schedules and loss modification.} We do not rely on training curriculum such as warmup schedules in sCT. For each CFG scale, we use the default CFG velocity for all $t,s$, while MeanFlow relies on additional hyperparameters to turn on CFG only when $t$ is within a predetermined range. We also strictly adhere to the simple $\gL_2$ loss without any adaptive weighting as proposed by MeanFlow. We believe the simplicity in our design allows for more scalability.

\begin{table}[t] 
    \centering
\begin{minipage}[t]{0.75\textwidth}
\vspace{5pt}
\begin{minipage}[t]{0.32\textwidth}
 \vspace{0pt}
 \centering 
 \resizebox{\linewidth}{!}{
 \begin{tabular}{lc}
    \toprule
       trunc $(\mu_t, \sigma_t), (\mu_s, \sigma_s)$  &   FID \\
         \midrule
                $(-0.4, 1.0), (-0.4,1.0)$ & 4.59 \\ 
         \midrule
                $(2.0, 1.0), (-0.4,1.0)$ &  4.00 \\
         \midrule
                $(2.0, 2.0), (-0.4,1.0)$ & 4.01  \\
         \midrule
                $(2.0, 2.0), (-0.6,1.0)$ & 7.88 \\ 
         \midrule
                $(1.0, 1.0), (-0.4,1.0)$ & \textbf{3.70} \\
         \bottomrule
    \end{tabular}
 } 
\end{minipage} 
 \begin{minipage}[t]{0.32\textwidth}
 \vspace{0pt}
 \centering
 \resizebox{\linewidth}{!}{
 \begin{tabular}{lc}
    \toprule
       clamp $(\mu_t, \sigma_t), (\mu_s, \sigma_s)$  &  FID \\
         \midrule
                $(2.0, 2.0), (-0.4,1.0)$ & 3.88 \\ 
         \midrule
                $(2.0, 1.0), (-0.4,1.0)$ &  4.11  \\
         \midrule
                $(2.0, 1.0), (-0.6,1.0)$ &  4.00 \\
         \midrule
                $(1.0, 1.0), (-0.4, 1.0)$ & \textbf{3.66}  \\ 
         \midrule
                $(1.0, 2.0), (-0.4,1.0)$ & 3.83 \\
         \bottomrule
    \end{tabular}
 } 
\end{minipage}  
 \begin{minipage}[t]{0.32\textwidth}
 \vspace{0pt}
 \centering
 \resizebox{\linewidth}{!}{
 \begin{tabular}{lc}
    \toprule
       gap $(\mu_g, \sigma_g), (\mu_s, \sigma_s)$  &   FID \\ 
         \midrule
                $(-0.4, 1.0), (-0.4,1.0)$ &  5.12  \\
         \midrule
                $(-0.8, 1.0), (-0.4,1.0)$ &  \textbf{3.72}  \\ 
         \midrule
                $(-0.8, 1.4), (-0.4,1.0)$ &  3.95  \\
         \midrule
                $(-1.0, 1.2), (-0.4,1.0)$ &  3.82  \\ 
         \midrule
                $(-1.0, 1.4), (-0.4,1.0)$ & 3.94 \\
         \bottomrule
    \end{tabular}
 } 
 \end{minipage}
    
  \captionsetup{type=table, margin=10pt}
    \caption{ Ablation studies on different time sampling schemes, evaluated by 1-NFE FID. }
    \label{tab:ts_within} 
 \end{minipage}
 \begin{minipage}[t]{0.24\textwidth}
  \captionsetup{type=figure, margin=0pt}
 \vspace{2pt}
 \centering 
 \includegraphics[width=\linewidth]{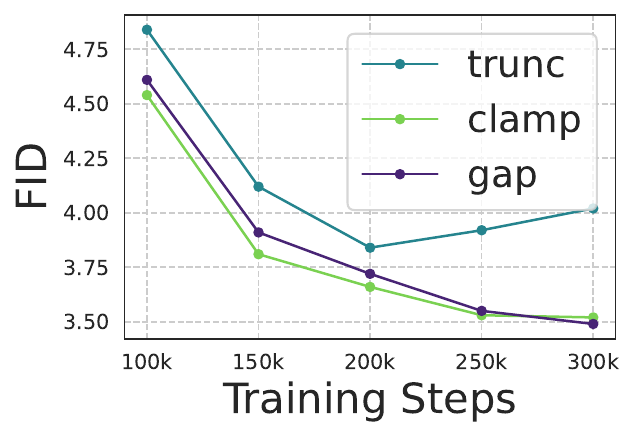}   
    \caption{ FID trend on the sampling schemes.}
    \label{fig:ts_across} 
\end{minipage}  
\end{table}

\begin{table}[] 
    \centering
    
\hfill
\begin{subtable}[t]{0.222\textwidth}
 \vspace{0pt}
 \centering  
 \resizebox{\linewidth}{!}{
 \begin{tabular}{lc}
    \toprule
      $p(w)$   &     1-NFE \\
         \midrule
       rand., $w=1.5$ &  9.37  \\
       \midrule 
       rand., $w=2$ & 5.14 \\
         \midrule
        const., $w=1.5$ &   6.66 \\
         \midrule
        const., $w=2$   &   \textbf{4.81} \\ 
         \bottomrule
    \end{tabular}
 } 
    \caption{Random vs. constant CFG sampling evaluated at example $w$'s.}
    \label{tab:w_ablate}
\end{subtable}
\hspace{2pt}
 \begin{subtable}[t]{0.195\textwidth}
 \vspace{0pt}
 \centering 
 \resizebox{\linewidth}{!}{
    \begin{tabular}{lc}
    \toprule
        EMA rate $\gamma$ &  1-NFE  \\
         \midrule
         $\gamma =0$ &  10.24 \\
       \midrule 
        $\gamma=0.9$ & 5.08  \\
         \midrule
        $\gamma=0.99$&  \textbf{4.90 }  \\
         \midrule
       $\gamma=0.999$   &  6.04  \\  
         \bottomrule
    \end{tabular}
    }
    \caption{EMA of pseudo-target $\theta^*_\text{sg}$.}
    \label{tab:ema_ablate} 
\end{subtable}
\hspace{2pt}
 \begin{subtable}[t]{0.285\textwidth}
 \vspace{0pt}
 \centering 
 \resizebox{\linewidth}{!}{
    \begin{tabular}{lcc}
    \toprule
        Scaled Param. &   1-NFE  & 2-NFE\\
         \midrule
        yes, $w=2$ & \textbf{ 3.72 }  & 3.35  \\
       \midrule 
        no, $w=2$ &  3.82   &  \textbf{3.27}      \\ 
         \midrule[0.25ex]
        yes, $w=1.5$ &  \textbf{6.04}   & \textbf{4.60} \\
       \midrule 
        no, $w=1.5$ &  9.32   & 7.02  \\ 
         \bottomrule
         
    \end{tabular}
    }
    \caption{With vs. without scaled parameterization.}
    \label{tab:scale_ablate} 
\end{subtable} 
\hspace{2pt}
\begin{subtable}[t]{0.22\textwidth}
 \vspace{0pt}
 \centering  
 \resizebox{\linewidth}{!}{
 \begin{tabular}{lcc}
    \toprule
    \% t=s   &  1-NFE & 2-NFE  \\
         \midrule
       0 &  \textbf{3.72} & 3.35  \\
       \midrule 
       10\%  & 3.91 & 3.18\\
         \midrule
        20\% &  3.88 & \textbf{2.97}  \\ 
         \midrule
        30\% & 3.97  & 3.07 \\ 
         \bottomrule
    \end{tabular}
 } 
    \caption{Prob. for $t=s$ during training.}
    \label{tab:t_ablate}
\end{subtable}
\hfill
    \caption{FID ablation on various sampling/parameterization decisions.}
\vspace{-8pt}    
\end{table}

\subsection{Ablation Studies}\label{sec:ablation}

We ablate various implementation decisions and discuss insights from different parameter choices. Results are presented with XL/2 architecture trained for 200K steps with batch size 1024.

\mypara{Time sampling.} Similar to Flow Matching, different time sampling schemes can greatly affect performance. We explore 3 different kinds of sampling schemes.
\begin{itemize}[leftmargin=20pt]
    \item \textbf{Truncated sampling} (\texttt{trunc}). Let $(\mu_t, \sigma_t), (\mu_s, \sigma_s)$ denote $t$ being sampled from logit-normal distribution with mean and standard deviation $(\mu_t, \sigma_t)$ and $s$ beinsg sampled from truncated logit-normal distribution with parameters $(\mu_s, \sigma_s)$ such that $s\leq t$.
    \item \textbf{Clamped independent sampling} (\texttt{clamp}). Let $(\mu_t, \sigma_t), (\mu_s, \sigma_s)$ denote $t$ and $s$ being independently sampled from logit-normal distributions with mean and standard deviation $(\mu_t, \sigma_t)$ and $(\mu_s, \sigma_s)$, and set $s=t$ if $s > t$.
    \item \textbf{Truncated gap sampling} (\texttt{gap}). Let $(\mu_g, \sigma_g), (\mu_s, \sigma_s)$ denote the gap $g = t-s$ being sampled from logit-normal distribution with mean and standard deviation $(\mu_g, \sigma_g)$, and $s$ sampled from logit-normal with parameters $(\mu_s, \sigma_s)$ truncated at $1-g$. Then set $t = s + g$.
\end{itemize} 
In~\tabref{tab:ts_within}  we show comparison within each sampling scheme and conclude that better results are obtained when $t$ is biased towards $1$ and $s$ biased towards $0$ for the model to learn taking longer strides. However,  biasing too much, \eg $\mu_t=2.0, \sigma_t=2.0$, leads to worse results. For \texttt{gap}, sampling $t-s$ with lower mean is preferrable to higher mean. In~\figref{fig:ts_across}, we also observe \texttt{trunc}'s performance degrades and \texttt{clamp} plateaus faster than \texttt{gap}. Therefore \texttt{gap} wins over longer training horizons.

\begin{wrapfigure}{r}{0.32\textwidth}  
\begin{minipage}[t]{\linewidth}
    \vspace{0pt} 
        \centering
 \resizebox{\linewidth}{!}{
        \begin{tabular}{lc}
    \toprule
        &  FID \\
         \midrule
       gap $(-0.8,1.0),(-0.4,1.0)$ &  3.44  \\
       \midrule 
        gap* $(-0.8,1.0),(-0.4,1.0)$ & \textbf{3.36} \\ 
         \bottomrule
    \end{tabular}
    }
    \caption{Sampler comparisons.}
    \label{tab:gap-star}
    \end{minipage}
    \vspace{-0.3\baselineskip}
\end{wrapfigure}
All above sampling schemes follow the na\"ive joint sampling of $(s,t)$. We lastly explore separate time distribution for the FM loss term (see \secref{sec:practical}). We follow  \texttt{gap}-sampler and denote the sampler \texttt{gap*} with parameters  $(\mu_g, \sigma_g), (\mu_s, \sigma_s)$ where $(s,t)$ is jointly sampled for the first loss term and $(\mu_s, \sigma_s)$ is used to construct a new logit-normal distribution to independently sample $s'$ for the FM loss. We find in \figref{tab:gap-star} that \texttt{gap*} generally performs better in 1-NFE FID.


\mypara{CFG sampling.} As described in the previous section, due to limited capacity of the model, we observe tradeoff in performance when CFG is randomly sampled during training. This is reflected in \tabref{tab:w_ablate}. We note that constant CFG always outperforms random CFG, and for constant CFG sampling we find $w=2$ converging faster than the default $w=1.5$ for Flow Matching. 

\mypara{EMA target rate $\gamma$.} The target EMA weight $\theta^*$ plays a significant role in accelerating convergence of the model. Shown in \tabref{tab:ema_ablate}, non-EMA target, \ie $\gamma=0$, noticeably lags behind $\gamma>0$ alternatives. However, too large of a $\gamma$, \eg 0.9999, also causes instability because of the overly slow target update. A sweet spot exists around $\gamma=0.99$ which we use as default. Besides attribute its success to variance reduction because EMA's slower weight update implies much lower optimization noise. In addition, EMA is commonly used to evaluate diffusion models for its quality boost~\citep{song2020score}, so being the optimization target also provides better learning signal to the model.

\mypara{Scaled parameterization.} In \tabref{tab:scale_ablate}, we find scaled parameterization is generally beneficial, but its benefit may vary depending on training/data settings. We therefore suggest testing different choices for different settings for best performance.

\mypara{Probability for $t=s$.} Inspired by MeanFlow~\citep{geng2025mean}, we investigate whether setting $t=s$ (when it reduces to pure FM training) is helpful for overall performance. We find that $>0\%$ actually degrades 1-NFE performance while it marginally improves 2-NFE performance. This tradoff persists throughout the training but we observe diminishing return as training goes on. Therefore, we do not find this practice helpful in general and leave it out of our design space in general.

\subsection{Memory and Runtime Analysis}\label{sec:cost}

\begin{wrapfigure}{r}{0.54\textwidth}  
\vspace{-\intextsep}
\begin{minipage}[t]{\linewidth}
    \vspace{0pt} 
        \centering
 \resizebox{\linewidth}{!}{
        \begin{tabular}{lcc}
    \toprule
        &  Runtime (s) & Memory (GB) \\
         \midrule
      MeanFlow  (w/ na\"ive SDPA)    & - &  OOM  \\
         \midrule
       MeanFlow (w/ our kernel)   & 0.81 &  \textbf{46.73}\\
         \midrule
        TVM (w/ improved DiT)  & 0.95 & 71.44  \\
         \midrule
        TVM (w/ na\"ive DiT)  & 0.86 & 59.53 \\
         \midrule
        TVM (w/ improved DiT, detach JVP)  & \textbf{0.69}  &   55.71   \\
         \bottomrule
    \end{tabular}
    }
    \caption{Per-step time and per-GPU memory study.}
    \label{tab:}
    \end{minipage}
    \vspace{-0.5\baselineskip}
\end{wrapfigure}
We analyze per-step runtime and per-GPU memory consumption (averaged over 10 training steps without counting EMA update cost) without any performance optimization (\eg \texttt{torch.compile}). Shown on the right is a comparison with MeanFlow using 256 batch size on 8-GPU H100 cluster on ImageNet-$256\ttimes 256$. Since JVP with Flash Attention is not officially supported by PyTorch, the simplest way to implement MeanFlow is to use na\"ive SDPA, which runs out of memory. MeanFlow with our kernel does not run OOM. TVM with Lipschitz control (\secref{sec:practical}) experiences higher runtime and memory mostly due to architectural change, since TVM with na\"ive DiT is only marginally more expensive than MeanFlow with na\"ive DiT. We note that much of the additional compute can be compiled away via PyTorch. Additionally, if step time is a concern, we can simply detach the JVP which biases learning gradient but dramatically reduces runtime. We leave further efficiency optimization to future work.

\section{Conclusion}
 
 We present Terminal Velocity Matching, a framework for training one/few-step generative model from scratch. Different from prior works, we match the terminal velocity of a flow trajectory instead of the initial velocity, and we show our objective can explicitly upper bound $2$-Wasserstein distance up to a constant. Our proposed objective is conceptually simple and easy to implement, and our theory sheds light on flaws of current diffusion transformers for their lack of Lipschitz continuity. TVM achieves state-of-the-art one-step result for a model trained from scratch and surpasses baseline diffusion models with only 4 NFEs. We hope it can provide new insights into making scalable and performant one/few-step generative paradigms to come.


 

\bibliography{iclr2026_conference}

@article{lu2024simplifying,
  title={Simplifying, stabilizing and scaling continuous-time consistency models},
  author={Lu, Cheng and Song, Yang},
  journal={arXiv preprint arXiv:2410.11081},
  year={2024}
}

@article{song2023improved,
  title={Improved techniques for training consistency models},
  author={Song, Yang and Dhariwal, Prafulla},
  journal={arXiv preprint arXiv:2310.14189},
  year={2023}
}

@article{albergo2023stochastic,
  title={Stochastic interpolants: A unifying framework for flows and diffusions},
  author={Albergo, Michael S and Boffi, Nicholas M and Vanden-Eijnden, Eric},
  journal={arXiv preprint arXiv:2303.08797},
  year={2023}
}

@article{liu2025learning,
  title={Learning to Integrate Diffusion ODEs by Averaging the Derivatives},
  author={Liu, Wenze and Yue, Xiangyu},
  journal={arXiv preprint arXiv:2505.14502},
  year={2025}
}

@article{boffi2025build,
  title={How to build a consistency model: Learning flow maps via self-distillation},
  author={Boffi, Nicholas M and Albergo, Michael S and Vanden-Eijnden, Eric},
  journal={arXiv preprint arXiv:2505.18825},
  year={2025}
}

@article{song2023consistency,
  title={Consistency models},
  author={Song, Yang and Dhariwal, Prafulla and Chen, Mark and Sutskever, Ilya},
  journal={arXiv preprint arXiv:2303.01469},
  year={2023}
}

@article{lipman2022flow,
  title={Flow matching for generative modeling},
  author={Lipman, Yaron and Chen, Ricky TQ and Ben-Hamu, Heli and Nickel, Maximilian and Le, Matt},
  journal={arXiv preprint arXiv:2210.02747},
  year={2022}
}

@inproceedings{karras2024analyzing,
  title={Analyzing and improving the training dynamics of diffusion models},
  author={Karras, Tero and Aittala, Miika and Lehtinen, Jaakko and Hellsten, Janne and Aila, Timo and Laine, Samuli},
  booktitle={Proceedings of the IEEE/CVF Conference on Computer Vision and Pattern Recognition},
  pages={24174--24184},
  year={2024}
}

@article{kingma2024understanding,
  title={Understanding diffusion objectives as the elbo with simple data augmentation},
  author={Kingma, Diederik and Gao, Ruiqi},
  journal={Advances in Neural Information Processing Systems},
  volume={36},
  year={2024}
}

@article{song2020score,
  title={Score-based generative modeling through stochastic differential equations},
  author={Song, Yang and Sohl-Dickstein, Jascha and Kingma, Diederik P and Kumar, Abhishek and Ermon, Stefano and Poole, Ben},
  journal={arXiv preprint arXiv:2011.13456},
  year={2020}
}

@inproceedings{peebles2023scalable,
  title={Scalable diffusion models with transformers},
  author={Peebles, William and Xie, Saining},
  booktitle={Proceedings of the IEEE/CVF International Conference on Computer Vision},
  pages={4195--4205},
  year={2023}
}

@inproceedings{esser2024scaling,
  title={Scaling rectified flow transformers for high-resolution image synthesis},
  author={Esser, Patrick and Kulal, Sumith and Blattmann, Andreas and Entezari, Rahim and M{\"u}ller, Jonas and Saini, Harry and Levi, Yam and Lorenz, Dominik and Sauer, Axel and Boesel, Frederic and others},
  booktitle={Forty-first International Conference on Machine Learning},
  year={2024}
}

@article{ho2020denoising,
  title={Denoising diffusion probabilistic models},
  author={Ho, Jonathan and Jain, Ajay and Abbeel, Pieter},
  journal={Advances in neural information processing systems},
  volume={33},
  pages={6840--6851},
  year={2020}
}

@inproceedings{sohl2015deep,
  title={Deep unsupervised learning using nonequilibrium thermodynamics},
  author={Sohl-Dickstein, Jascha and Weiss, Eric and Maheswaranathan, Niru and Ganguli, Surya},
  booktitle={International conference on machine learning},
  pages={2256--2265},
  year={2015},
  organization={PMLR}
}

@article{liu2022flow,
  title={Flow straight and fast: Learning to generate and transfer data with rectified flow},
  author={Liu, Xingchao and Gong, Chengyue and Liu, Qiang},
  journal={arXiv preprint arXiv:2209.03003},
  year={2022}
}

@article{podell2023sdxl,
  title={Sdxl: Improving latent diffusion models for high-resolution image synthesis},
  author={Podell, Dustin and English, Zion and Lacey, Kyle and Blattmann, Andreas and Dockhorn, Tim and M{\"u}ller, Jonas and Penna, Joe and Rombach, Robin},
  journal={arXiv preprint arXiv:2307.01952},
  year={2023}
}

@inproceedings{rombach2022high,
  title={High-resolution image synthesis with latent diffusion models},
  author={Rombach, Robin and Blattmann, Andreas and Lorenz, Dominik and Esser, Patrick and Ommer, Bj{\"o}rn},
  booktitle={Proceedings of the IEEE/CVF conference on computer vision and pattern recognition},
  pages={10684--10695},
  year={2022}
}

@misc{sora,
  title = {Video generation models as world simulators},
  author={OpenAI},
  howpublished = {\url{https://openai.com/sora/}},
  year = {2024}
}

@article{frans2024one,
  title={One Step Diffusion via Shortcut Models},
  author={Frans, Kevin and Hafner, Danijar and Levine, Sergey and Abbeel, Pieter},
  journal={arXiv preprint arXiv:2410.12557},
  year={2024}
}

@article{tee2024physics,
  title={Physics informed distillation for diffusion models},
  author={Tee, Joshua Tian Jin and Zhang, Kang and Yoon, Hee Suk and Gowda, Dhananjaya Nagaraja and Kim, Chanwoo and Yoo, Chang D},
  journal={arXiv preprint arXiv:2411.08378},
  year={2024}
}

@article{kim2023consistency,
  title={Consistency trajectory models: Learning probability flow ode trajectory of diffusion},
  author={Kim, Dongjun and Lai, Chieh-Hsin and Liao, Wei-Hsiang and Murata, Naoki and Takida, Yuhta and Uesaka, Toshimitsu and He, Yutong and Mitsufuji, Yuki and Ermon, Stefano},
  journal={arXiv preprint arXiv:2310.02279},
  year={2023}
}

@article{boffi2024flow,
  title={Flow map matching},
  author={Boffi, Nicholas M and Albergo, Michael S and Vanden-Eijnden, Eric},
  journal={arXiv preprint arXiv:2406.07507},
  volume={2},
  year={2024}
}

@article{vincent2011connection,
  title={A connection between score matching and denoising autoencoders},
  author={Vincent, Pascal},
  journal={Neural computation},
  volume={23},
  number={7},
  pages={1661--1674},
  year={2011},
  publisher={MIT Press}
}

@article{li2023self,
  title={Self-consistent velocity matching of probability flows},
  author={Li, Lingxiao and Hurault, Samuel and Solomon, Justin M},
  journal={Advances in Neural Information Processing Systems},
  volume={36},
  pages={57038--57057},
  year={2023}
}

@article{chen2025visual,
  title={Visual generation without guidance},
  author={Chen, Huayu and Jiang, Kai and Zheng, Kaiwen and Chen, Jianfei and Su, Hang and Zhu, Jun},
  journal={arXiv preprint arXiv:2501.15420},
  year={2025}
}

@article{raissi2019physics,
  title={Physics-informed neural networks: A deep learning framework for solving forward and inverse problems involving nonlinear partial differential equations},
  author={Raissi, Maziar and Perdikaris, Paris and Karniadakis, George E},
  journal={Journal of Computational physics},
  volume={378},
  pages={686--707},
  year={2019},
  publisher={Elsevier}
}

@article{cuomo2022scientific,
  title={Scientific machine learning through physics--informed neural networks: Where we are and what’s next},
  author={Cuomo, Salvatore and Di Cola, Vincenzo Schiano and Giampaolo, Fabio and Rozza, Gianluigi and Raissi, Maziar and Piccialli, Francesco},
  journal={Journal of Scientific Computing},
  volume={92},
  number={3},
  pages={88},
  year={2022},
  publisher={Springer}
}

@article{dhariwal2021diffusion,
  title={Diffusion models beat gans on image synthesis},
  author={Dhariwal, Prafulla and Nichol, Alexander},
  journal={Advances in neural information processing systems},
  volume={34},
  pages={8780--8794},
  year={2021}
}

@article{ma2024sit,
  title={Sit: Exploring flow and diffusion-based generative models with scalable interpolant transformers},
  author={Ma, Nanye and Goldstein, Mark and Albergo, Michael S and Boffi, Nicholas M and Vanden-Eijnden, Eric and Xie, Saining},
  journal={arXiv preprint arXiv:2401.08740},
  year={2024}
}

@inproceedings{hoogeboom2023simple,
  title={simple diffusion: End-to-end diffusion for high resolution images},
  author={Hoogeboom, Emiel and Heek, Jonathan and Salimans, Tim},
  booktitle={International Conference on Machine Learning},
  pages={13213--13232},
  year={2023},
  organization={PMLR}
}

@article{ho2022classifier,
  title={Classifier-free diffusion guidance},
  author={Ho, Jonathan and Salimans, Tim},
  journal={arXiv preprint arXiv:2207.12598},
  year={2022}
}

@article{heusel2017gans,
  title={Gans trained by a two time-scale update rule converge to a local nash equilibrium},
  author={Heusel, Martin and Ramsauer, Hubert and Unterthiner, Thomas and Nessler, Bernhard and Hochreiter, Sepp},
  journal={Advances in neural information processing systems},
  volume={30},
  year={2017}
}

@inproceedings{bao2023all,
  title={All are worth words: A vit backbone for diffusion models},
  author={Bao, Fan and Nie, Shen and Xue, Kaiwen and Cao, Yue and Li, Chongxuan and Su, Hang and Zhu, Jun},
  booktitle={Proceedings of the IEEE/CVF conference on computer vision and pattern recognition},
  pages={22669--22679},
  year={2023}
}

@article{qi2023lipsformer,
  title={Lipsformer: Introducing lipschitz continuity to vision transformers},
  author={Qi, Xianbiao and Wang, Jianan and Chen, Yihao and Shi, Yukai and Zhang, Lei},
  journal={arXiv preprint arXiv:2304.09856},
  year={2023}
}

@inproceedings{kim2021lipschitz,
  title={The lipschitz constant of self-attention},
  author={Kim, Hyunjik and Papamakarios, George and Mnih, Andriy},
  booktitle={International Conference on Machine Learning},
  pages={5562--5571},
  year={2021},
  organization={PMLR}
}

@article{castin2023smooth,
  title={How smooth is attention?},
  author={Castin, Val{\'e}rie and Ablin, Pierre and Peyr{\'e}, Gabriel},
  journal={arXiv preprint arXiv:2312.14820},
  year={2023}
}

@article{dao2022flashattention,
  title={Flashattention: Fast and memory-efficient exact attention with io-awareness},
  author={Dao, Tri and Fu, Dan and Ermon, Stefano and Rudra, Atri and R{\'e}, Christopher},
  journal={Advances in neural information processing systems},
  volume={35},
  pages={16344--16359},
  year={2022}
}

@article{ba2016layer,
  title={Layer normalization},
  author={Ba, Jimmy Lei and Kiros, Jamie Ryan and Hinton, Geoffrey E},
  journal={arXiv preprint arXiv:1607.06450},
  year={2016}
}

@inproceedings{
zhou2025inductive,
title={Inductive Moment Matching},
author={Linqi Zhou and Stefano Ermon and Jiaming Song},
booktitle={Forty-second International Conference on Machine Learning},
year={2025}
}

@article{geng2025mean,
  title={Mean flows for one-step generative modeling},
  author={Geng, Zhengyang and Deng, Mingyang and Bai, Xingjian and Kolter, J Zico and He, Kaiming},
  journal={arXiv preprint arXiv:2505.13447},
  year={2025}
}

@article{sabour2025align,
  title={Align Your Flow: Scaling Continuous-Time Flow Map Distillation},
  author={Sabour, Amirmojtaba and Fidler, Sanja and Kreis, Karsten},
  journal={arXiv preprint arXiv:2506.14603},
  year={2025}
}

@article{touvron2023llama,
  title={Llama: Open and efficient foundation language models},
  author={Touvron, Hugo and Lavril, Thibaut and Izacard, Gautier and Martinet, Xavier and Lachaux, Marie-Anne and Lacroix, Timoth{\'e}e and Rozi{\`e}re, Baptiste and Goyal, Naman and Hambro, Eric and Azhar, Faisal and others},
  journal={arXiv preprint arXiv:2302.13971},
  year={2023}
}

@article{milakov2018onlinesoftmax,
  author       = {Maxim Milakov and
                  Natalia Gimelshein},
  title        = {Online normalizer calculation for softmax},
  journal      = {CoRR},
  volume       = {abs/1805.02867},
  year         = {2018},
  url          = {http://arxiv.org/abs/1805.02867},
  eprinttype    = {arXiv},
  eprint       = {1805.02867},
  timestamp    = {Mon, 13 Aug 2018 16:48:49 +0200},
  biburl       = {https://dblp.org/rec/journals/corr/abs-1805-02867.bib},
  bibsource    = {dblp computer science bibliography, https://dblp.org}
}

@article{wan2025wan,
  title={Wan: Open and advanced large-scale video generative models},
  author={Wan, Team and Wang, Ang and Ai, Baole and Wen, Bin and Mao, Chaojie and Xie, Chen-Wei and Chen, Di and Yu, Feiwu and Zhao, Haiming and Yang, Jianxiao and others},
  journal={arXiv preprint arXiv:2503.20314},
  year={2025}
}
\bibliographystyle{iclr2026_conference}

\newpage

\appendix

\section{Theorems and Derivations}\label{app:theorem}

\subsection{General Network Parameterization}\label{app:conditions}
In general, we can parameterize our net displacement as 
\begin{align} 
     \rvf_\theta(\rvx_t, t, s) = \gamma(t,s)\rvF_\theta(\rvx_t, t, s)
\end{align}   
for some $ \gamma(t,s)$ that satisfies $\gamma(t,t) = 0$ for boundary condition. And for the velocity condition, we let
\begin{align}
     \rvu_\theta(\rvx_t, t) \defeq   \odv{}{s}\rvf_\theta(\rvx_t,t,s)\Big\rvert_{s=t} = \bar{\gamma}(t) \rvF_\theta(\rvx_t, t, t) 
\end{align}
where $\bar{\gamma}(t) =  \partial_s \gamma(t,s) \rvert_{s=t}$.

We derive $ \odv{}{s}\rvf_\theta(\rvx_t,t,s)\rvert_{s=t}$ below for clarity.
\begin{align} 
     \odv{}{s} \rvf_\theta(\rvx_t, t, s) \Big\rvert_{s=t} &= \partial_s\gamma(t,s) \rvF_\theta(\rvx_t,t,s) + \gamma(t,s)\partial_s\rvF_\theta(\rvx_t,t,s) \Big\rvert_{s=t} \\
     &=  \partial_s\gamma(t,s)\rvert_{s=t} \rvF_\theta(\rvx_t,t,t) + \gamma(t,t)\left[\partial_s\rvF_\theta(\rvx_t,t,s)   \Big\rvert_{s=t}\right] \\
     &= \partial_s\gamma(t,s)\rvert_{s=t} \rvF_\theta(\rvx_t,t,t) \\
     &= \bar{\gamma}(t) \rvF_\theta(\rvx_t,t,t)  
\end{align}

\subsection{Terminal Velocity Error Upper Bounds Displacement Error}\label{app:tvcbound}

\begin{restatable}[]{lemma}{tvcbound}\label{lem:tvcbound}
    Under mild regularity assumptions, the following inequality holds,
    \begin{align}
        \gL_\text{displ}^t(\theta) \leq   \int_0^t \E_{\rvx_t}\left[   \twonorm{   \odv{}{s} \rvf_\theta(\rvx_t, t, s) - \rvu(\psi(\rvx_t , t ,s), s)}^2  \right] \ud s  
    \end{align}
    where $p_t(\rvx_t)$ is marginal distributions for initial points $\rvx_t$.
\end{restatable}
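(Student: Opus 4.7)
The plan is to apply the Fundamental Theorem of Calculus (FTC) in $s$ to rewrite the displacement error as an $s$-integral of a derivative difference, then apply Cauchy--Schwarz to commute the squared norm with the integral. The key observation is that both $\rvf_\theta(\rvx_t, t, s)$ and the ground-truth $\rvf(\rvx_t, t, s) := \int_t^s \rvu(\rvx_r, r)\,\ud r$ vanish at $s = t$: the former by the $(s-t)$ scaling built into the parameterization $\rvf_\theta = (s-t)\rvF_\theta$, the latter trivially by definition of the integral. Moreover, by the defining property of the flow $\psi$, the $s$-derivative of $\rvf$ equals $\rvu(\psi(\rvx_t, t, s), s)$ everywhere along the trajectory.

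First I would apply FTC separately to $\rvf_\theta$ and $\rvf$ on the interval $[0, t]$ and subtract, using the common boundary value $\vzero$ at $s = t$, to obtain
\begin{align*}
\rvf_\theta(\rvx_t, t, 0) - \int_t^0 \rvu(\rvx_r, r)\,\ud r = -\int_0^t \Big[\odv{}{s}\rvf_\theta(\rvx_t, t, s) - \rvu(\psi(\rvx_t, t, s), s)\Big]\,\ud s.
\end{align*}
Second, I would take the squared Euclidean norm of both sides and apply Cauchy--Schwarz in the form $\twonorm{\int_0^t g(s)\,\ud s}^2 \leq t \int_0^t \twonorm{g(s)}^2\,\ud s$, which since $t \in [0, 1]$ is further upper-bounded by $\int_0^t \twonorm{g(s)}^2\,\ud s$. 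Third, I would take expectation over $\rvx_t \sim p_t$ and swap the expectation with the $s$-integral by Fubini to land exactly on the stated right-hand side.

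I do not anticipate a deep obstacle here; the argument reduces to a single FTC invocation followed by Cauchy--Schwarz. The only non-routine step is spelling out the ``mild regularity assumptions'' mentioned in the statement, which I would take to be: (i) $s \mapsto \rvf_\theta(\rvx_t, t, s)$ is absolutely continuous in $s$ so that FTC applies (automatic for any $C^1$ parameterization $\rvF_\theta$), (ii) $\rvu$ is regular enough that the flow $\psi$ exists globally and the identity $\odv{}{s}\rvf(\rvx_t, t, s) = \rvu(\psi(\rvx_t, t, s), s)$ holds pointwise, and (iii) the integrand is jointly measurable and $L^1$-integrable in $(\rvx_t, s)$ so that Fubini is applicable. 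Under these conditions the inequality follows immediately with no further machinery.
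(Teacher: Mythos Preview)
Your proposal is correct and follows essentially the same route as the paper's proof: rewrite both $\rvf_\theta(\rvx_t,t,0)$ and $\int_t^0 \rvu(\rvx_r,r)\,\ud r$ as $s$-integrals via FTC (using the shared boundary value at $s=t$), then bound $\twonorm{\int_0^t g(s)\,\ud s}^2$ by $\int_0^t \twonorm{g(s)}^2\,\ud s$ and exchange expectation with the $s$-integral. If anything you are more explicit than the paper, which lumps the Cauchy--Schwarz step (together with the $t\le 1$ simplification) and the Fubini swap under ``triangle inequality and regularity assumption.''
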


\begin{proof}
We assume both displacement maps are Riemann-integrable, then
\begin{align}
    \gL_\text{displ}^t(\theta) &=   \E_{\rvx_t\sim p_t(\rvx_t)}\left[ \twonorm{ \rvf_\theta(\rvx_t, t, 0) -\int_t^0 \rvu(\rvx_s, s) \ud s}^2 \right]   \\ 
    &=  \E_{\rvx_t\sim p_t(\rvx_t) }\left[ \twonorm{ \int_0^t \odv{}{s}\rvf_\theta(\rvx_t, t, s) \ud s -  \int_0^t \rvu(\psi(\rvx_t, t, s), s) \ud s }^2 \right]   \\
    &\stackrel{(*)}{\leq}  \int_0^t  \E_{\rvx_t\sim p_t(\rvx_t) }\left[   \twonorm{ \odv{}{s}\rvf_\theta(\rvx_t, t, s)  -    \rvu(\psi(\rvx_t, t, s), s)  }^2   \right]\ud s   
\end{align}
where $(*)$ uses triangle inequality and regularity assumption.
\end{proof}

\subsection{Terminal Velocity Error Reduces to FM}\label{app:tvmreduce}
Consider the terminal velocity error for each time $s$ as
\begin{align}
     \E_{\rvx_t}\left[   \twonorm{   \odv{}{s} \rvf_\theta(\rvx_t, t, s) - \rvu(\psi(\rvx_t , t ,s), s)}^2  \right] 
\end{align}
Expand the inner term 
\begin{align}
    \odv{}{s} \rvf_\theta(\rvx_t, t, s) = \rvF_\theta(\rvx_t,t,s) + (s-t)\partial_s \rvF_\theta(\rvx_t,t,s)
\end{align}
and for the inner norm term its limit exists as $t\rightarrow s$:
\begin{align}
     &\; \lim_{t\rightarrow s} \left[\odv{}{s} \rvf_\theta(\rvx_t, t, s) - \rvu(\psi(\rvx_t , t ,s), s) \right] \\
    &=  \lim_{t\rightarrow s}\left[\rvF_\theta(\rvx_t,t,s) + (s-t)\partial_s \rvF_\theta(\rvx_t,t,s) - \rvu(\psi(\rvx_t , t ,s), s)\right] \\
    &= \rvF_\theta(\rvx_s,s,s) - \rvu(\rvx_s, s)
\end{align}
Thus, the limit of its expected $\gL_2$-norm exists (assuming this norm is bounded) and is equal to $\gL_2$-norm of its limit, which is
\begin{align}
    \E_{\rvx_s}\left[   \twonorm{  \rvF_\theta(\rvx_s,s,s) - \rvu(\rvx_s, s)}^2  \right] 
\end{align}
and this is the original FM loss, which is equivalent (up to a constant) to conditional Flow Matching loss used in practice in \equref{eq:simple-fmloss}.

\subsection{Main Theorem}

\main*
\begin{proof}
Note that the ground-truth flow map $\psi$ is invertible and that $\psi(\psi(\rvx_t, t, 0), 0, t) = \rvx_t$ and $\psi(\psi(\rvx_0, 0, t), t, 0) = \rvx_0$. 
\begin{align}
    W_2^2(\rvf_{t\rightarrow 0}^\theta \# p_t, p_0) &\stackrel{(i)}{\leq} \int p_0(\rvx_0) \twonorm{ \rvf_\theta(\psi(\rvx_0, 0, t), t, 0) - \rvx_0 }^2 \ud \rvx_0 \\
    &= \int p_0(\rvx_0) \twonorm{ \rvx_t + \rvf_\theta(\psi(\rvx_0, 0, t), t, 0) - \psi(\psi(\rvx_0, 0, t), t, 0) }^2 \ud \rvx_0 \\
    &= \int p_t(\rvx_t) \twonorm{ \rvx_t + \rvf_\theta(\rvx_t, t, 0) - \psi(\rvx_t, t, 0) }^2 \ud \rvx_t \\
    &= \int p_t(\rvx_t) \twonorm{ \int_t^0\odv{}{s} \rvf_\theta(\rvx_t, t, s)\ud s - \int_t^0 \rvu(\rvx_s, s) \ud s }^2 \ud \rvx_t \\
    &\stackrel{(ii)}{\leq} \int p_t(\rvx_t) \int_0^t      \underbrace{ \twonorm{  \odv{}{s} \rvf_\theta(\rvx_t, t, s)  -    \rvu(\psi(\rvx_t, t, s), s)  }^2  }_{\varepsilon(\rvx_t , t, s)}  \ud s  \ud \rvx_t \label{eq:tvm-trueerror}
\end{align}
where $(i)$ is due to Wasserstein distance being the infimum of all couplings, and we choose a particular coupling of the two distribution by inverting $\rvx_0$ with $\psi$ and remapping with respective flow maps. 
And $(ii)$ is due to~\lemref{lem:tvcbound}. 
Now, we inspect $\varepsilon(\rvx_t , t, s)$ specifically by noticing that
\begin{align}
    & \varepsilon(\rvx_t , t, s) \nonumber \\&= \lVert \odv{}{s} \rvf_\theta(\rvx_t, t, s)  -    \rvu(\psi(\rvx_t, t, s), s)  +  \rvu_\theta(\psi(\rvx_t, t, s), s) - \rvu_\theta(\psi(\rvx_t, t, s), s)  \nonumber \\ &\quad + \rvu_\theta(\rvf_\theta(\rvx_t, t, s), s) - \rvu_\theta(\rvf_\theta(\rvx_t, t, s), s) \rVert^2 \\
    &\stackrel{(i)}{\leq}\underbrace{ \twonorm{ \odv{}{s}\rvf_\theta(\rvx_t, t, s)  - \rvu_\theta(\rvf_\theta(\rvx_t, t, s), s) }^2 + \twonorm{ \rvu_\theta(\psi(\rvx_t, t, s), s) -   \rvu(\psi(\rvx_t, t, s), s) }^2}_{\delta(\rvx_t, t, s)}     \nonumber \\ &\quad  + \twonorm{ \rvu_\theta(\rvf_\theta(\rvx_t, t, s), s)   - \rvu_\theta(\psi(\rvx_t, t, s), s) }^2  \\
    &\stackrel{(ii)}{\leq} \delta(\rvx_t, t, s) + L(s)\int_s^t \underbrace{\twonorm{  \odv{}{s}\rvf_\theta(\rvx_t, t, u)  -    \rvu(\psi(\rvx_t, t, u), u)}^2}_{\varepsilon(\rvx_t, t, u)} \ud u
\end{align} 
where $(i)$ is due to triangle inequality and $(ii)$ is due to Lipschitz-continuous assumption. We further notice that right-hand-side contains a term that is the integral of the left-hand-side. For simplicity, we hold $\rvx_t$ and $t$ constant and let $$y(s) = \int_s^t \varepsilon(\rvx_t, t, u) \ud u\;,\quad\quad \dot{y}(s) = - \varepsilon(\rvx_t, t, s) $$  and we arrive at the following inequality,
\begin{align}
    - \dot{y}(s) &\leq \delta(\rvx_t, t, s) + L(s) y(s) \\
    -\delta(\rvx_t, t, s) &\leq \dot{y}(s) + L(s) y(s) \\
    -e^{\int_{t}^r L(u)\ud u} \delta(\rvx_t, t, r) &\leq \odv{}{r} \left( e^{\int_{t}^r L(u)\ud u} y(r)  \right) \\
    -\int_s^t e^{\int_{t}^r L(u)\ud u} \delta(\rvx_t, t, r) \ud r  &\leq \left[  e^{\int_{t}^r L(u)\ud u} y(r) \right]_{s}^t \\
    -\int_s^t e^{\int_{t}^r L(u)\ud u} \delta(\rvx_t, t, r) \ud r  &\leq   \cancelto{0}{y(t)} - e^{\int_{t}^s L(u)\ud u} y(s) \\
      e^{\int_{t}^s L(u)\ud u} y(s) &\leq  \int_s^t e^{\int_{t}^r L(u)\ud u} \delta(\rvx_t, t, r) \ud r \\
    y(s) &\leq  \int_s^t e^{\int_{t}^r L(u)\ud u - \int_{t}^s L(u)\ud u} \delta(\rvx_t, t, r) \ud r \\
    y(s) &\leq  \int_s^t e^{\int_{s}^r L(u)\ud u  } \delta(\rvx_t, t, r) \ud r 
\end{align}
Therefore, setting $s=0$ we have
\begin{align}
    \int_0^t \varepsilon(\rvx_t, t, u) \ud u  \leq \int_0^t \underbrace{e^{\int_{0}^r L(u)\ud u  }}_{\lambda[L](r)} \delta(\rvx_t, t, u) \ud u
\end{align}
where the left-hand side is the inner term of~\equref{eq:tvm-trueerror}. Then,
\begin{align}
     \text{\equref{eq:tvm-trueerror}} &\leq \int p_t(\rvx_t)   \int_0^t \lambda[L](s)\cdot \delta(\rvx_t, t, s) \ud s \ud \rvx_t \\
     &= \int_0^t \lambda[L](s) \cdot \E_{\rvx_t}\Big[ \twonorm{\rvf_\theta(\rvx_t, t, s)  - \rvu_\theta(\rvf_\theta(\rvx_t, t, s), s) }^2 \nonumber \\ &\quad\quad\quad\quad\quad\quad\quad\quad + \twonorm{ \rvu_\theta(\psi(\rvx_t, t, s), s) -   \rvu(\psi(\rvx_t, t, s), s) }^2 \Big] \ud s \\
     &=  \int_0^t \lambda[L](s) \cdot \Bigg[ \E_{\rvx_t}\Big[ \twonorm{\rvf_\theta(\rvx_t, t, s)  - \rvu_\theta(\rvf_\theta(\rvx_t, t, s), s) }^2 \nonumber \\ &\quad\quad\quad\quad + \E_{\rvx_t}\Big[ \twonorm{ \rvu_\theta(\psi(\rvx_t, t, s), s) -   \rvu(\psi(\rvx_t, t, s), s) }^2 \Big] \Bigg] \ud s\\
     &= \int_0^t \lambda[L](s) \cdot \Bigg[ \E_{\rvx_t}\Big[ \twonorm{\rvf_\theta(\rvx_t, t, s)  - \rvu_\theta(\rvf_\theta(\rvx_t, t, s), s) }^2 \nonumber \\ &\quad\quad\quad\quad\quad\quad   + \underbrace{\E_{\rvx_s}\Big[ \twonorm{ \rvu_\theta(\rvx_s, s) -   \rvu(\rvx_s, s) }^2 \Big]}_{(a)} \Bigg] \ud s \label{eq:second-last-bound}
\end{align} 
where $(a)$ can be rewritten as
\begin{align}
    (a) =  \E_{\rvx_s, \rvv_s}\left[   \twonorm{  \rvu_\theta(\rvx_s, s) - \rvv_s}^2  \right]  + \Tilde{C}
\end{align}
where $\Tilde{C}$ is some non-optimizable constant~\citep{lipman2022flow}. This is also a classical result connecting score matching and denoising score matching~\citep{vincent2011connection}. 

Now, after substitution, we notice that our bound in~\equref{eq:second-last-bound} becomes 
\begin{align}
    \int_0^t \lambda[L](s) \gL_\text{TVM}^{t,s}(\theta) \ud s + C
\end{align}
where $C$ is some other constant, which completes the proof.
\end{proof}

\subsection{Reduction to Flow Matching}\label{app:reducetofm}
When $t=s$, we show that $\gL_\text{TVM}(\theta)$ reduces to Flow Matching loss.
\begin{align}
     \gL_\text{TVM}^{t,t}(\theta) &= \E_{\rvx_t,\rvx_s,\rvv_s}\left[  \twonorm{   \odv{}{s} \rvf_\theta(\rvx_t, t, s)  - \rvu_\theta(\rvf_{\theta}(\rvx_t , t ,s), s)}^2   +   \Big\lVert  \rvu_\theta(\rvx_s, s) - \rvv_s \Big\rVert^2 \right]\Bigg\rvert_{s=t}\\
     &= \E_{\rvx_t,\rvv_t}\left[ \cancel{ \twonorm{   \rvu_\theta(\rvx_t, t)  - \rvu_\theta(\rvx_t, t)}^2 }  +   \Big\lVert  \rvu_\theta(\rvx_t, t) - \rvv_t \Big\rVert^2 \right]
\end{align}  

\subsection{Derivation for class-conditional training target} \label{app:cfg-target}

In \equref{eq:cfg-fmloss}, we introduced the CFG training target as 
\begin{align*}
       w\rvv_t + (1-w)\rvu_{\theta_\text{sg}^*}^1(\rvx_s,s,\varnothing)  
\end{align*}
We derive below that the minimizer of  \equref{eq:cfg-fmloss} is the CFG velocity $w\rvu(\rvx_s, s, c) + (1-w) \rvu(\rvx_s, s)$. 

\begin{proof}
    Consider the training objective (without weighting for simplicity)
    \begin{align}
        \E_{\rvx_s, \rvv_s, s, c, w}\left[ \twonorm{ \rvu_\theta^w(\rvx_s,s,c) -  \left[  w\rvv_t + (1-w)\rvu_{\theta_\text{sg}}^1(\rvx_s,s,\varnothing)  ) \right]}^2 \right]
    \end{align}
    when $c=\varnothing,w=1$, then it reduces to 
    \begin{align}
        \E_{\rvx_s, \rvv_s, s}\left[ \twonorm{ \rvu_\theta^1(\rvx_s,s,\varnothing) - \rvv_t }^2 \right]
    \end{align}
    with the minimizer $\theta_\text{min}$ satisfying $  \rvu_{\theta_\text{min}}^1(\rvx_s,s,\varnothing) = \rvu(\rvx_s, s)$.
    
    At minimum of the loss for other $w$ and $c$, it must satisfy
    \begin{align}
        \rvu_{\theta_\text{min}}^w(\rvx_s,s,c) &= \E_{\rvv_s}\left[w \rvv_s + (1-w) \rvu_{\theta_\text{min}}^1(\rvx_s,s,\varnothing) \mid \rvx_s, s, c, w\right] \\
        &= w  \E_{\rvv_s}\left[ \rvv_s  \mid \rvx_s, s, c\right] + (1-w) \rvu_{\theta_\text{min}}^1(\rvx_s,s,\varnothing) \\
        &= w \rvu(\rvx_s, s, c ) + (1-w) \rvu(\rvx_s, s)
    \end{align}
    
\end{proof}

\section{Additional Details on Practical Challenges}

\subsection{Lipschitzness of RMSNorm}\label{app:rmsnorm}
Recall the definition of RMSNorm, for input $x\in \R^d$ and a small constant $\epsilon > 0$
\begin{align}
    \text{RMSNorm}(x) = \frac{x}{\text{RMS}(x)}, \quad \text{where}\quad \text{RMS}(x)=\sqrt{\frac{1}{d}\sum_{i=1}^d x_i + \epsilon} 
\end{align}
And its Jacobian can be calculated as
\begin{align}
    \odv{}{x_j} \text{RMSNorm}(x_i) &=  \odv{}{x_j} \left( \frac{x_i}{\text{RMS}(x)}\right)\\
    &= \frac{\delta_{ij}\text{RMS}(x) - x_i x_j / \text{RMS}(x) / d }{\text{RMS}(x)^2} \\
    &= \frac{\delta_{ij}}{\text{RMS}(x)} - \frac{x_i x_j}{d\cdot \text{RMS}(x) ^3}
\end{align}
Since matrix norm (largest singular value) $\sigma(\mA)$ of matrix $\mA$ is upper bounded by its Frobenius norm, and $\text{RMS}(x) \geq \epsilon$, we have each element $\odv{}{x_j} \text{RMSNorm}(x_i)$ in the Jacobian matrix bounded via
\begin{align}
    \abs{\odv{}{x_j} \text{RMSNorm}(x_i)}^2 &\leq \abs{\frac{\delta_{ij}}{\text{RMS}(x)}}^2 + \abs{\frac{x_i x_j}{d\cdot \text{RMS}(x) ^3}}^2 \\
    &=  \abs{\frac{\delta_{ij}}{\text{RMS}(x)}}^2  + \left(\frac{x_i/\sqrt{d}}{\text{RMS}(x)}\right)^2\cdot  \left(\frac{x_j/\sqrt{d}}{\text{RMS}(x)}\right)^2\cdot \frac{1}{\text{RMS}(x)}^2 \\
    &\leq \frac{1}{\epsilon} + \frac{1}{\epsilon}\\
    & = \frac{2}{\epsilon}
\end{align}
Therefore, the Frobenius norm is bounded and hence the matrix norm.

\subsection{Full Description of Normalization of Modulation}\label{app:modparams}

Note that there are 6 modulation parameters in total for each DiT layer, denoted as
\begin{align}
    a_1(t), b_1(t), c_1(t), a_2(t), b_2(t), c_2(t) = \text{split}(\text{AdaLN\_Modulation}(t), 6)
\end{align} 
and we pass each of the above parameters through $\text{RMSNorm}^-(\cdot)$ to obtain $$a_1^-(t), b_1^-(t), c_1^-(t), a_2^-(t), b_2^-(t), c_2^-(t)$$
(which can be done in parallel) and the new normalized DiT layer is
\begin{align*}
      x &= x + c_1^-(t) * \text{ATTN}(\text{RMSNorm}^-(x) * a_1^-(t) + b_1^-(t) ) \\ 
    x &= x +  c_2^-(t)  * \text{MLP}(\text{RMSNorm}^-(x) * a_2^-(t) + b_2^-(t)) 
\end{align*}

\begin{figure}[t]
\vspace{-\intextsep} 
  \centering
  \begin{python} 
  def get_f_and_dfds(net, xt, t, s, c, w): 
      def model_wrapper(x_, t_, s_):  # we use t-s for second time condition
          return net(x_, t_, (t_ - s_), c, w) 
      F, dFds = torch.func.jvp(model_wrapper, (xt, t, s), (0, 0, 1))
      f_ts = xt + (s - t) * F
      dfds = (F + (s - t) * dFds)  
      return f_ts, dfds
  \end{python}  
    \vspace{-4pt}
  \caption{PyTorch-style JVP code.}\label{fig:jvpcode}
    \vspace{-\baselineskip}
\end{figure}



\section{Flash Attention JVP with Backward Pass}\label{app:jvp}

In transformer models, scaled dot-product attention (SDPA) is often among the most, if not the most, computationally expensive operations.
The cost stems not only from its high FLOP requirements -- $O(MN)$ in general, and $O(N^2)$ in the case of self-attention -- but also from the quadratic memory footprint of the query–key matrix multiplication.

Computing the Jacobian-Vector Product (JVP) of SDPA is even more demanding, typically requiring about three times the cost of the standard forward pass.
Flash attention~\citep{dao2022flashattention} fuses the matrix multiplication with an online softmax operation \citep{milakov2018onlinesoftmax}, thereby eliminating the need to store the intermediate $QK^\top$ matrix in GPU memory.
Subsequent work has shown that JVP SDPA can also be implemented in a FlashAttention-style manner, where both, primal SDPA and JVP SDPA are computed jointly to avoid redundant computation~\citep{lu2024simplifying}.

Building on these ideas, we implement efficient JVP SDPA forward and backward kernels in Triton. We first take inspiration from open-source implementations without backward support\footnote{\url{https://github.com/Ryu1845/min-sCM/blob/main/standalone_multihead_jvp_test.py}}.
And the additional backward pass through the standard (“primal”) SDPA is handled independently using the open-source implementation from~\citep{dao2022flashattention}.
To obtain full gradients with respect to $Q$, $K$, and $V$, we combine the input gradients from both backward passes.

Similar to standard SDPA, the JVP backward pass can leverage online softmax to avoid storing large intermediate matrices in GPU memory.
However, the increased complexity of JVP SDPA requires additional optimizations to run efficiently on GPUs.
Most notably, we found it crucial to split the backward computation into multiple smaller kernels to reduce register spills caused by the large number of intermediate tensors.

\mypara{Background.} Recall the attention operation as
\begin{equation}
    \text{ATTN}(Q, K, V) = V \cdot \text{softmax}\left(\frac{QK^T}{\sqrt{d_k}}\right)
    \label{eq:sdpa}
\end{equation}
and let the query, key, and value blocks be denoted by $Q \in \mathbb{R}^{M \times d}$, $K \in \mathbb{R}^{N \times d}$ and $V \in \mathbb{R}^{N \times d}$. The tangent inputs are denoted as $\dot{Q}$, $\dot{K}$, $\dot{V}$. We use $\alpha=\frac{1}{\sqrt{d_{k}}}$ as the softmax scaling factor, and $\ell_i$ denotes the log-sum-exponential normalization for the $i$-th row of the attention scores, a short form for combining the softmax stabilization factor and the normalization.

\subsection{Multi-step backward pass}
For best performance, we decided to split up the backward pass into multiple smaller operations with shared paths through the graph.
Furthermore, the gradients $dQ$ and $d\dot{Q}$ are computed in row-parallel order, while $dK$, $d\dot{K}$, $dV$ and $d\dot{V}$ are processed in column-parallel order. 
In our tests, redundant, but coalesced computation of the large parts of the backward pass greatly outperformed a single, fused kernel relying on atomic operations.

We split the operation into 6 steps: 1) preprocess shared intermediates, 2) process $d\dot{K}$ and first part of $dK$, 3) process $d\dot{Q}$ and first part of $dQ$, 4) process second part of $dK$, 5) process second part of $dQ$, 6) process $d\dot{V}$ and $dV$.


\mypara{Step 1: Preprocess shared intermediates row-parallel.}
In the first step, we preprocess two intermediate sums $\Sigma_{1}\in \mathbb{R}^{M}$ and $\Sigma_{2}\in \mathbb{R}^{M}$ used in steps 2-5.

\begin{equation}
    \Sigma_{1, i}=\sum_{j} P_{i j}\left(d \dot{O} V^{\top}\right)_{i j}
\end{equation}
\begin{equation}
    \Sigma_{2, i}=\sum_{j} P_{i j}\left(\left(d \dot{O} \dot{V}^{\top}\right)_{i j}+\left(d \dot{O} V^{\top}\right)_{i j} N_{i j}\right)
\end{equation}
where 

\begin{equation}
    P_{i j}=\exp \left(\alpha S_{i j}-\ell_{i}\right),~~~~~
    S_{i j}=\sum_{r=1}^{d_{k}} Q_{i r} K_{j r},~~~~~
     N_{i j}=\alpha \dot{S}_{i j}-\frac{\mu_{i}}{l_{i}}
\end{equation}
amd 
\begin{equation}
    \dot{S}_{i j}=\sum_{r=1}^{d_{k}}\left(\dot{Q}_{i r} K_{j r}+Q_{i r} \dot{K}_{j r}\right),~~~~~
    \mu_{i}=\sum_{j} P_{i j}\left(\alpha \dot{S}_{i j}\right)
\end{equation}

\mypara{Step 2: process $d\dot{K}$ and $dK_1$ column-parallel.}

\begin{equation}
    (d K_1)_{j,:}=\alpha \sum_{i}\left[\left(\left(d \dot{O} V^{\top}\right)_{i j}-\Sigma_{1, i}\right) P_{i j}\right] \dot{Q}_{i,:}
\end{equation}
\begin{equation}
    (d \dot{K})_{j,:}=\alpha \sum_{i}\left[\left(\left(d \dot{O} V^{\top}\right)_{i j}-\Sigma_{1, i}\right) P_{i j}\right] Q_{i,:}
\end{equation}

\mypara{Step 3: Process $d\dot{Q}$ and $dQ_1$ row-parallel.}

\begin{equation}
    (d Q_1)_{i,:}=\alpha \sum_{j}\left[\left(\left(d \dot{O} V^{\top}\right)_{i j}-\Sigma_{1, i}\right) P_{i j}\right] \dot{K}_{j,:}
\end{equation}
\begin{equation}
    (d \dot{Q})_{i,:}=\alpha \sum_{j}\left[\left(\left(d \dot{O} V^{\top}\right)_{i j}-\Sigma_{1, i}\right) P_{i j}\right] K_{j,:}
\end{equation}

\mypara{Step 4: Process $dK$ column-parallel.}

\begin{equation}
    \begin{aligned}
(d K)_{j,:}=(d K_1)_{j,:} + \alpha \sum_{i} & \left\{\left[\alpha\left(-\Sigma_{1, i}\right) \dot{S}_{i j}+\Sigma_{1, i} \frac{\mu_{i}}{l_{i}}\right] P_{i j}\right. \\
& \left.+\left[\left(d \dot{O} \dot{V}^{\top}\right)_{i j}+\left(d \dot{O} V^{\top}\right)_{i j}\left(\alpha \dot{S}_{i j}-\frac{\mu_{i}}{l_{i}}\right)-\Sigma_{2, i}\right] P_{i j}\right\} Q_{i,}
\end{aligned}
\end{equation}

\mypara{Step 5: Process $dQ$ row-parallel.}

\begin{equation}
    \begin{aligned}
(d Q)_{i,:}=(d Q_1)_{i,:} + \alpha \sum_{j} & \left\{\left[\alpha\left(-\Sigma_{1, i}\right) \dot{S}_{i j}+\Sigma_{1, i} \frac{\mu_{i}}{l_{i}}\right] P_{i j}\right. \\
& \left.+\left[\left(d \dot{O} \dot{V}^{\top}\right)_{i j}+\left(d \dot{O} V^{\top}\right)_{i j}\left(\alpha \dot{S}_{i j}-\frac{\mu_{i}}{l_{i}}\right)-\Sigma_{2, i}\right] P_{i j}\right\} K_{j,:}
\end{aligned}
\end{equation}

\mypara{Step 6: Process $dV$ and $d\dot{V}$ column-parallel.}

\begin{equation}
    (d \dot{V})_{j,:}=\sum_{i} P_{i j}(d \dot{O})_{i,:}
\end{equation}
\begin{equation}
    (d V)_{j,:}=\sum_{j}\left[P_{i j}\left(\alpha \dot{S}_{i j}-\frac{\mu_{i}}{l_{i}}\right)\right](d \dot{O})_{i,:}
\end{equation}

\mypara{Caching softmax statistics.}
Like previous flash-attention implementations, we cache softmax statistics from the forward pass to speed up the backward pass, namely the log-sum-exp $\ell$, the sums $l$ and $\mu$ for each row of the output $O$.
Thus, the total overhead of the cache is only three values per row of $Q$.

\subsection{Evaluation}
We built a test bench to evaluate latency and peak memory consumption of our flash JVP SDPA kernels on different input shapes using an NVIDIA H100 SXM 80GB.
Due to the lack of existing alternatives, we compare against \emph{vanilla} SDPA, i.e.~a SDPA written as explicit math operations, which currently is the only way to train transformers in PyTorch with JVP enabled.

As our contribution focuses on the backward pass, we limit the latency and peak memory evaluation to the backward pass of a single SDPA operation, combining both paths through the primal ("normal") and the tangent (JVP) gradients.

\begin{table}[h]
    \centering
    \begin{tabular}{cccccc}
        \toprule
        \multirow{2}{*}{H} & \multirow{2}{*}{S} & \multicolumn{2}{c}{Latency [ms]}  & \multicolumn{2}{c}{Peak Memory [MB]} \\
        \cmidrule{3-6}
          &   &  ours &  vanilla & ours & vanilla \\
        \midrule
        1 & 128   & 1.31  & 1.51  & 64.69  & 64.80   \\
        1 & 1,024 & 1.38  & 1.54  & 69.52  & 94.02   \\
        1 & 4,096 & 1.96  & 1.53  & 86.06  & 508.1   \\
        1 & 8,192 & 3.98  & 4.33  & 108.1  & 1,816   \\
        1 & 16,384 & 10.06 & 16.11 & 152.3  & 7,024   \\
        1 & 32,768 & 40.24 & 63.85 & 240.5  & 27,808  \\
        24 & 128   & 1.40  & 1.55  & 80.55  & 83.17   \\
        24 & 1,024 & 1.42  & 2.03  & 196.4  & 784.4   \\
        24 & 4,096 & 15.13 & 24.52 & 593.5  & 10,721  \\
        24 & 8,192 & 58.70 & 96.93 & 1,123  & 42,115  \\
        24 & 16,384 & 238.4 & -    & 2,182  & -       \\
        24 & 32,768 & 958.6 & -    & 4,300  & -       \\
        \bottomrule
    \end{tabular}
    \caption{Performance comparison of our flash JVP kernels against vanilla SDPA kernels in PyTorch. H and S stand for number of heads in multi head attention and sequence length. Vanilla SDPA ran out of memory on a NVIDIA H100 in the last two tests.}
    \label{tab:performance_comparison}
\end{table}

\mypara{Results.} Shown in \tabref{tab:performance_comparison}, our implementation achieves a significant reduction in peak memory consumption. Compared to the reference, we save memory not only by reducing the cached variables between forward and backward pass, but more importantly by avoiding to store $N^2$ intermediate attention scores.
At the same time, our implementation achieves a speedup of up to 65\% compared to the reference.

 \section{Training Algorithm}\label{app:training-algo}

We present the training algorithm in \algref{alg:training-algo}. We additionally show a PyTorch-style pseudo-code in \figref{fig:jvpcode} for calculating $\rvf_\theta(\rvx_t,t,s)$ and $\odv{}{s}\rvf_\theta(\rvx_t,t,s)$ together with one JVP pass.

\begin{algorithm}[t]
\caption{TVM Training}\label{alg:training-algo}
\begin{algorithmic}
\STATE {\bf Input:} initialized model $\vf^\theta$, data $p_0(\rvx_0, c)$ and prior $p_1(\rvx_1)$, time distribution $p(t,s)$, guidance distribution $p(w)$ 
\STATE Initialize $\theta^* \leftarrow \theta, \theta^{**}\leftarrow \theta \quad\quad\quad $    \textcolor{gray}{// $\theta^*, \theta^{**}$ are EMA with rate $\lambda^*, \lambda^{**}$}.
\WHILE{model not converged}
\STATE Sample $(\rvx_0,c,\rvx_1) \sim p_0(\rvx_0, c)p_1(\rvx_1)$
\STATE Randomly drop $c$ with prob. $10\%$
\STATE Sample $(t,s,w)\sim p(t,s)p(w)\quad\quad $  \textcolor{gray}{// optionally sample $s'\sim p(s')$ for the second loss term}.
\STATE $\rvx_t \leftarrow (1-t)\rvx_0 + t\rvx_1$
\STATE $\rvx_s \leftarrow (1-s)\rvx_0 + s\rvx_1\quad\quad$    \textcolor{gray}{// optionally set $\rvx_{s'} \leftarrow (1-s')\rvx_0 + s'\rvx_1$}.
\STATE $\rvv_s \leftarrow \rvx_1 - \rvx_0 \quad\quad\quad\quad\quad$      \textcolor{gray}{// optionally set $\rvv_{s'} \leftarrow \rvx_1 - \rvx_0$}.
\STATE $\theta  \leftarrow $ optimizer step by minimizing $\hat{\gL}_\text{TVM}(\theta) = \E_{t,s,w}\left[\hat{\gL}_\text{TVM}^{t,s,w}(\theta)\right]\quad\quad $   \textcolor{gray}{// see \equref{eq:biased-guided-tvm}} 
\STATE $\quad\quad\quad\quad\quad\quad\quad\quad\quad\quad\quad\quad\quad\quad\quad\quad\quad\quad $ \textcolor{gray}{// optionally use $s'$ and $\rvx_{s'}$ for the second loss term} 
\STATE $\theta^* \leftarrow$ EMA update with rate $\lambda^*$
\STATE $\theta^{**} \leftarrow$ EMA update with rate $\lambda^{**}$
\ENDWHILE
\STATE {\bf Output:} learned model $\vf^{\theta^{**}}$  
\end{algorithmic}
\end{algorithm}

\section{Relation to Other Works}

\subsection{MeanFlow}\label{app:meanflow}

Let $\rvf_\theta(\rvx_t,t,s) = (s-t)\rvF_\theta(\rvx_t,t,s)$, we inspect
\begin{align}
    &\quad\odv{}{t} \rvf_\theta(\rvx_t,t,s) + \rvu(\rvx_t, t) \\&= -\rvF_\theta(\rvx_t,t,s) + (s-t)\odv{}{t}\rvF_\theta(\rvx_t,t,s)  + \rvu(\rvx_t, t) \\
    &= -\rvF_\theta(\rvx_t,t,s) + (s-t) \Big[\rvu(\rvx_t,t)\cdot\nabla_{\rvx_t}\rvF_{\theta }(\rvx_t,t,s) + \partial_t\rvF_{\theta }(\rvx_t,t,s) \Big]  + \rvu(\rvx_t, t)
\end{align}
Therefore,
\begin{align}
    &\quad\twonorm{\odv{}{t} \rvf_\theta(\rvx_t,t,s) + \rvu(\rvx_t, t)}^2 \\
    &= \Big\lVert-\rvF_\theta(\rvx_t,t,s) + \underbrace{(s-t) \Big[\rvu(\rvx_t,t)\cdot\nabla_{\rvx_t}\rvF_{\theta }(\rvx_t,t,s) + \partial_t\rvF_{\theta }(\rvx_t,t,s) \Big]  + \rvu(\rvx_t, t)}_{F_\text{tgt}}\Big\rVert^2 
\end{align}
which is the MeanFlow loss.

\subsection{Flow Map Self-Distillation}

\citet{boffi2025build} proposes three different self-distillation approaches for training flow maps from scratch. 

\section{Additional Experiment Details}\label{app:exp}
 
We present the overall training details in \tabref{tab:exp-setup}

\begin{table}[t!]
    \centering
    \small
    \resizebox{0.85\textwidth}{!}{
    \begin{tabular}{lcccc}
    \toprule
            &\multicolumn{2}{c}{ ImageNet-$256\ttimes 256$ }   & \multicolumn{2}{c}{ ImageNet-$512\ttimes 512$ }  \\ 
        \midrule
        \multicolumn{3}{l}{\textbf{Parameterization Setting}}\\
        \midrule
        Architecture & DiT-XL/2 & DiT-XL/2   &  DiT-XL/2 & DiT-XL/2 \\   
        Params (M) &  678 & 678 & 678 & 678  \\  
        2\textsuperscript{nd} time conditioning &  $t-s$ &  $t-s$  &  $t-s$  &  $t-s$  \\  
        Hidden dim & 1152 & 1152 &  1152 & 1152 \\
        Number of heads &  18 & 18 & 18 & 18 \\
        Main normalization &  RMS Norm  & RMS Norm & RMS Norm  & RMS Norm \\
        QK-Norm type & RMS Norm  & RMS Norm & RMS Norm  & RMS Norm \\
        Linear layer init\tablefootnote{Except for zero-init layers in AdaLN-Zero.} & Spectral & Spectral & Spectral & Spectral \\
        Time Embed init\tablefootnote{All time embedding MLP layers before input into DiT blocks.} & Spectral & $\gN(0, 0.02)$ & $\gN(0, 0.02)$ & Spectral\\
        Training iter & 300K & 300K & 300K & 300K \\
        \midrule
        \multicolumn{3}{l}{\textbf{Training Setting}}\\
        \midrule 
         Optimizer &   AdamW & AdamW& AdamW & AdamW\\ 
         Optimizer $\epsilon$ &   $10^{-8}$ &  $10^{-8}$&  $10^{-8}$&  $10^{-8}$  \\ 
         $\beta_1$ &  $0.9$ &  $0.9$ &  $0.9$ &  $0.9$  \\ 
         $\beta_2$ &    $0.95$ &  $0.95$&  $0.95$ &  $0.95$  \\ 
         Learning rate & $0.0001$  &  $0.0001$ &  $0.0001$   &    $0.0001$   \\ 
         Weight decay & $0$  &  $0$ &  $0$   &  $0$  \\ 
         Batch size &   $2048$  &  $2048$ &  $2048$   &  $2048$  \\ 
        $p(s,t)$ & \multicolumn{4}{c}{ gap* $(-0.8,1.0),(-0.4,1.0)$ }   \\ 
        Scaled param. & yes & yes & no & yes \\
        \% $t=s$\tablefootnote{This means the percentage of setting $t=s$, \eg $0\%$ means both loss terms exist at all times.} &   $0\%$ &  $0\%$&  $0\%$ &  $0\%$ \\
        $w$ & $2$ & $1.75$ &$ 2.5$ & $2.25$ \\
        Target EMA rate &  $0.99$ &  $0.99$&  $0.99$  &  $0.99$ \\
        Eval EMA rate\tablefootnote{EMA used for final evaluation, separate from the EMA used for training target.} & $0.9999$  &  $0.9999$ &  $0.9999$   &  $0.9999$ \\
        Label dropout &     $0.1$ &  $0.1$&  $0.1$ &  $0.1$\\  
        \bottomrule
    \end{tabular}
    }
    \caption{Experimental settings for different architectures and datasets.}
    \label{tab:exp-setup}
\end{table}

\subsection{Architecture and Optimization}

\mypara{VAE.} We follow \citet{zhou2025inductive} for the VAE setting, which uses the standard Stable Diffusion VAE~\citep{rombach2022high} but with a different scale and shift. Please refer to the paper for details.

\mypara{Architecture.} All architecture decisions follow DiT~\citep{peebles2023scalable} except for the changes described in the main text. For our XL-sized model, we follow DiT-XL and use 1152 hidden size but use 18 heads instead of 16 heads. This is purely for efficiency reasons because 18 heads under 1152 total hidden size implies head dimension is 64, while the original 16 heads result in head dimension 72. Flash attention JVP's runtime is sensitive to redundancy in memory allocations. As 64 is a power of 2 our kernel can fully allocate appropriately sized CUDA blocks, while 72 leaves significant chunks unused. We observe that the original 16-head decision is $\times 1.25$ slower than the 18-head variant. In comparing FID of the two versions, we observe they perform similarly throughout training. 

Following~\citet{zhou2025inductive}, we use $t-s$ as our second time condition into the architecture rather than directly injecting $s$. For injecting $w$, we follow~\citet{chen2025visual} and use $\beta = 1/w$ as our condition, and if random CFG is used training, we sample $\beta \sim \gU(\frac{1}{w_\text{max}}, \frac{1}{w_\text{min}})$ and set $w = 1/\beta$. Note that \citet{chen2025visual} uses $\beta\sim \gU(0,1)$ which amounts to $w_\text{min}=1$ and $w_\text{max}=\infty$, but arbitrarily large $w$ is never used in practice so $w_\text{max}$ can be set to a realistic finite value.

\mypara{Optimization.} Besides setting $\beta_2=0.95$, we follow the default optimizer used by DiT and optimize with BF16 precision. We de not use any learning rate scheduler. 

\subsection{Details on Random CFG with MeanFlow}\label{app:meanflowcfg}

In MeanFlow~\citep{geng2025mean}, the authors introduce a mixing scale $\kappa$ such that the field with guidance scale $w$ is given by
\begin{align}
    \rvv(\rvx_t, t, c, w) = w\rvv_t + \kappa \rvu_\theta(\rvx_t, t, c, w) + (1- w - \kappa)\rvu_\theta(\rvx_t, t, w)
\end{align}
It specifies that the effective guidance scale is $w' = \frac{w}{(1-\kappa)}$. This is because since $\rvu_\theta(\rvx_t, t, c, w)\approx \rvv(\rvx_t, t, c, w)$, rearranging it to LHS and dividing both sides by $(1-\kappa)$ gives
\begin{align}
    (1-\kappa)  \rvv(\rvx_t, t, c, w) &= w\rvv_t  + (1- w - \kappa)\rvu_\theta(\rvx_t, t, w)\\
     \rvv(\rvx_t, t, c, w) &= \frac{w}{(1-\kappa)}\rvv_t  + (1- \frac{w}{(1-\kappa)})\rvu_\theta(\rvx_t, t, w)
\end{align}
This constrains $\kappa\in [0,1)$. However, in the case of random CFG, to make use of $ \rvu_\theta(\rvx_t, t, c, w)$, we try the simple linear mixing (the default CFG reweighting) 
\begin{align}
    \rvv(\rvx_t, t, c, w+\kappa) = w\rvv_t + \kappa \rvu_\theta(\rvx_t, t, c, 1) + (1- w - \kappa)\rvu_\theta(\rvx_t, t, 1)
\end{align}
where $w$ and $\kappa$ are both randomly sampled with finite boundaries. In this case $\rvu_\theta(\rvx_t, t, c, 1) \not\approx \rvv(\rvx_t, t, c, w+\kappa)$ and thus $\kappa$ is not constrained to be smaller than 1. When $w=0$, it becomes regular CFG with network approximation of the CFG velocity, and when $\kappa=0$ it becomes MeanFlow CFG with $\rvv_t$ approximation of the CFG velocity. This construction subsumes both implementation cases. In our experiments, we use $\kappa\sim \gU(0,c_\text{max}), w\sim\gU(1,c_\text{max})$ for some constant $c_\text{max}$. However, we acknowledge that this observed training fluctuation may depend on exact training settings and environments, and may be fixable via empirical tricks such as adjusting AdamW parameters or gradient clipping, etc. We present the training in the simplest settings without such tricks to best illustrate our point.

\subsection{CFG-Conditioned Flow Matching}\label{app:cfgfm}
 
\begin{wrapfigure}{r}{0.37\textwidth}  
\vspace{-\intextsep}
\begin{minipage}[t]{\linewidth}
    \vspace{0pt}
        \centering
        \includegraphics[width=\textwidth]{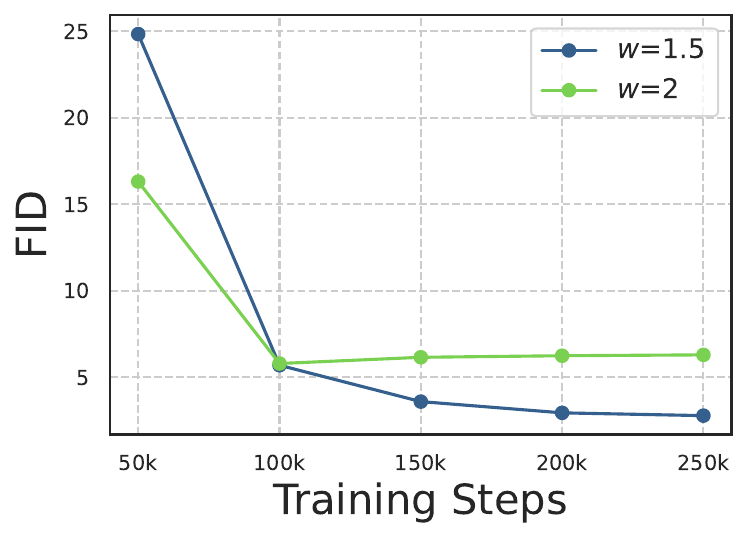} 
        \caption{$w$-conditioned FM training experiences tradeoff.}\label{fig:fmfid}
        \label{fig:cfgfm}
    \end{minipage}
\vspace{-\intextsep}
\end{wrapfigure}
As in our method, we similarly observe tradeoff in FID if FM is trained to condition on CFG scale $w$ with randomly sampled $w$ during training~\citep{chen2025visual}. During inference time, $w$ is injected into the network so that the CFG velocity field can be approximated by a single forward call. We inject $w$ using positional embedding just like the diffusion time, and during training we sample $\beta\sim \gU(0,1)$ and set $w = 1/\beta$, following~\citet{chen2025visual}. We show in \figref{fig:cfgfm} that as the model trains, the FID of $w=1.5$ decreases but $w=2$ increases for later training steps. This tradeoff is similarly observed in our method as presented in the main text.

\newpage
\subsection{Additional Visual Samples}
\begin{figure}[H]
    \centering
    \includegraphics[width=0.95\linewidth]{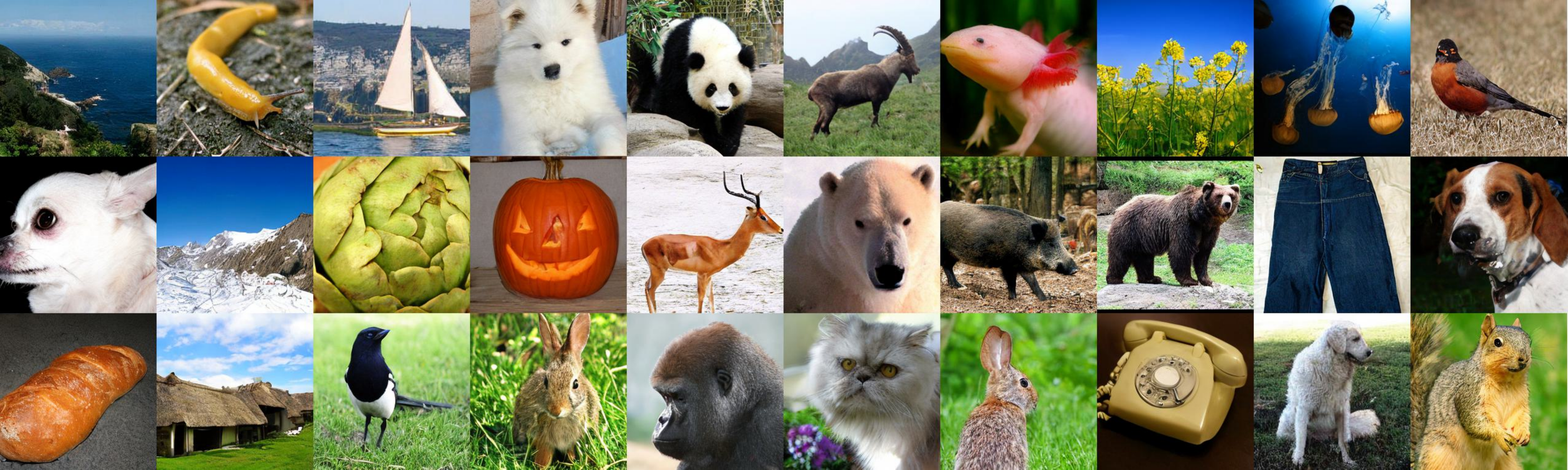}
    \caption{Additional ImageNet-$256\ttimes 256$ samples from 1-NFE TVM model.}
    \label{fig:appendix_sample}
\end{figure}

\end{document}